\newcommand{\pdf}[2]{p_{#1} \left( #2 \right)}
\newcommand{\cdf}[2]{P_{#1} \left( #2 \right)}
\newcommand{\tildecdf}[2]{\tilde{P}_{#1} \left( #2 \right)}
\newcommand{\empcdf}[2]{P^{\text{emp}}_{#1} \left( #2 \right)}
\newcommand{\tildeempcdf}[2]{\tilde{P}^{\text{emp}}_{#1} \left( #2 \right)}
\newcommand{\pr}[1]{\mathds{P} \left\{ #1 \right\}}
\newcommand{\ev}[1]{\mathds{E} \left[ #1 \right]}
\newcommand{\ind}[1]{\mathds{1} \left\{ #1 \right\}}
\newcommand{\size}[1]{\left\vert #1 \right\vert}
\newcommand{\bin}[2]{B \left( #1 , #2 \right)}
\newcommand{\ps}{\hat{C}^{R}_{\alpha} ( X_{n + 1} ; Z_{1 : n} )}
\newcommand{\thr}{\tau_{\alpha} ( R_{1 : n} )}
\newcommand{\mf}{\#_{R} ( r )}
\newcommand{\epss}{\ev{\size{\ps}}}
\newtheorem{theorem}{Theorem}
\newtheorem{corollary}[theorem]{Corollary}
\theoremstyle{definition}
\theoremstyle{remark}
\newcolumntype{C}[1]{>{\centering\arraybackslash}p{#1}}
\begin{document}


\twocolumn[

\aistatstitle{On the Expected Size of Conformal Prediction Sets}

\aistatsauthor{Guneet S. Dhillon \And George Deligiannidis \And Tom Rainforth}

\aistatsaddress{Department of Statistics \\ University of Oxford \\ \texttt{guneet.dhillon@stats.ox.ac.uk} \And Department of Statistics \\ University of Oxford \\ \texttt{deligian@stats.ox.ac.uk} \And Department of Statistics \\ University of Oxford \\ \texttt{rainforth@stats.ox.ac.uk}}

]

\begin{abstract}
While conformal predictors reap the benefits of rigorous statistical guarantees on their error frequency, the size of their corresponding prediction sets is critical to their practical utility. Unfortunately, there is currently a lack of finite-sample analysis and guarantees for their prediction set sizes. To address this shortfall, we theoretically quantify the expected size of the prediction sets under the split conformal prediction framework. As this precise formulation cannot usually be calculated directly, we further derive point estimates and high-probability interval bounds that can be empirically computed, providing a practical method for characterizing the expected set size. We corroborate the efficacy of our results with experiments on real-world datasets for both regression and classification problems.
\end{abstract}

\section{INTRODUCTION}
\label{section:intro}

Imagine a company that recognizes a market where multiple businesses are interested in the same task, e.g.,~the inspection and quality control of manufactured goods.
Seeing this opportunity, the company decides to provide an AI-powered solution, to help automate and streamline the process for these customers.

A major characteristic that customers would want to know is:~\emph{how often will the system make errors?}
Moreover, they will often want to constrain the frequency of errors to a level acceptable for their particular use case.
The conformal prediction framework~\citep{vovk2005algorithmic,shafer2008tutorial} fulfills this requirement.
Instead of a single label, it predicts a set of labels (based on past experiences) and rigorously guarantees that the error of the predicted set not containing the true label is bound to a user-specified level.
Notably, the split conformal prediction framework~\citep{papadopoulos2002inductive,vovk2005algorithmic,lei2015conformal} further provides computational efficiency for practical deployment.

A natural follow-up question that customers might ask 
is:~\emph{how big are the prediction sets expected to be?}
This is a valid concern.
For instance, a naive prediction of the entire label space achieves zero error; however, such predictions are not useful.
Thus, the sizes of the prediction sets are of significant relevance in practice.

Existing works have considered the asymptotic behavior of the expected size of prediction sets, analyzing conformal predictors in the context of statistical optimality~\citep{lei2013distribution-sets,lei2013distribution-regression,vovk2014conformal,lei2014classification,lei2015conformal,vovk2016criteria,sadinle2019least}.
In practice, however, we are concerned with prediction set sizes in the finite-sample setting.
In fact, in most practical applications of conformal prediction---such as image classification~\citep{angelopoulos2021uncertainty,fisch2021few}, natural language processing~\citep{fisch2021efficient,fisch2021few,schuster2021consistent}, drug discovery~\citep{fisch2021efficient,fisch2021few}, clinical trials~\citep{lu2021distribution,lu2022fair}, robotics~\citep{dixit2023adaptive,lindemann2023safe}, and election polling~\citep{cherian2020washington}---algorithms are compared based on:~(i) their frequencies of error, and (ii) the expected sizes of their prediction sets.

Currently, this expected size is empirically estimated by averaging the sizes of the constructed prediction sets over multiple runs, via Monte Carlo averaging.
For our hypothetical company and customers, this amounts to the company collecting labeled data from multiple customers and running predictions repeatedly; this is an expensive procedure.
Additionally, each customer will have a different set of parameters---e.g.,~different frequency of error requirements, different number of labeled data, etc.---each resulting in a different expected set size.
To provide a satisfactory answer to each customer, the company will have to run this procedure separately for each set of values.
As such, it is often not feasible to provide customers with an indication of the expected prediction set size using this approach.

To overcome these shortcomings, we theoretically quantify the expected size of split conformal prediction sets.
As computing this quantity directly is often intractable in practice, we derive procedures to empirically approximate it.
Our proposed procedures require data to be collected~\emph{only once} to provide a point estimate and high-probability interval bounds for the expected prediction set size.
Consequently, our hypothetical company no longer requires access to labeled data from multiple customers and repeated runs of the conformal algorithm.
Instead, the company could use a single set of pre-collected in-house data to compute both point and interval estimates for the expected size of the prediction sets constructed by its proposed (conformal) system.
From the customer's perspective, this information allows them to reliably evaluate the company's system and further determine whether to use it or not.

In summary, our contributions are as follows:
\begin{itemize}
    \item
    We theoretically quantify the expected size of the prediction sets constructed under the split conformal prediction framework (cf.~\cref{section:size-quantification}).
    \item
    We derive practical point estimates and high-probability intervals for the above (cf.~\cref{section:size-estimation}).
    \item
    We illustrate the efficacy of our results experimentally on regression and classification (cf.~\cref{section:experiments}).
\end{itemize}

\section{BACKGROUND}
\label{section:background}

We are concerned with supervised learning, where we are provided with labeled data and want to predict the label for new test inputs.
The split conformal prediction framework~\citep{papadopoulos2002inductive,vovk2005algorithmic,lei2015conformal} predicts a set of labels for a given test input such that the probability of error, i.e.,~the predicted set not containing the true label, is guaranteed to be bound at a user-specified level.
This is achieved by first splitting the labeled data into training and calibration data.
The training data trains a (non-conformity) scoring function, which is used to compute scores on the calibration data; then a threshold is determined using these calibration scores.
The algorithm uses the computed score threshold to construct the prediction set for a new test input.

Formally, we denote $Z_{i} = ( X_{i} , Y_{i} ) \in \mathcal{X} \times \mathcal{Y}$, for $i = 1 , \ldots , n + 1$, to be $n + 1$ data points sampled i.i.d.~from an arbitrary distribution over feature and label spaces $\mathcal{X}$ and $\mathcal{Y}$ respectively.
We treat $Z_{1 : n} = \{ Z_{1} , \ldots , Z_{n} \}$ as the calibration data and $Z_{n + 1}$ as the test datum; the training data is used to implement the scoring function as discussed later.
For a significance level $\alpha \in ( 0 , 1 )$, we want to predict a set of labels $\hat{C}_{\alpha} ( X_{n + 1} ; Z_{1 : n} ) \subseteq \mathcal{Y}$ such that the probability of error is bound by $\alpha$, i.e.,
\begin{equation}
    \pr{Y_{n + 1} \not\in \hat{C}_{\alpha} \left( X_{n + 1} ; Z_{1 : n} \right)} \leq \alpha.
\label{equation:conformal-error}
\end{equation}
This is a marginal probability, taken over both the test datum $Z_{n + 1}$ and the calibration data $Z_{1 : n}$.

The split conformal framework achieves this by reasoning about a non-conformity function~\citep{vovk2005algorithmic,shafer2008tutorial}.
We denote this function by $R : \mathcal{X} \times \mathcal{Y} \mapsto \mathcal{R}$, which maps a data point to a real-valued non-conformity score ($\mathcal{R} \subseteq \mathds{R}$).
This function quantifies how non-conforming a data point is.
If $R ( x , y )$ is small, then $( x , y )$ is conforming; conversely, if $R ( x , y )$ is large, then $( x , y )$ is non-conforming or atypical.
This function is implemented as a distance function $L$ between the label $y$ and its prediction $\hat{y} = M ( x )$ obtained from a machine learning model $M$.
The model $M$ is trained on training data that is independent of the test and the calibration data.
Then, the non-conformity score is defined as $R ( x , y ) = L ( M ( x ) , y )$.
For example, for regression problems, the model $M$ could be a deep neural network, and the loss $L$ the $l_{1}$ loss, resulting in the non-conformity score $R ( x , y ) = \size{M(x) - y}$.

The split conformal framework uses the above function to compute non-conformity scores.
For each calibration datum $Z_{i}$, for $i = 1 , \ldots , n$, we denote its corresponding calibration non-conformity score as $R_{i} = R ( Z_{i} )$.
Since the test datum label $Y_{n + 1}$ is not observed, we consider all possible realizations $y \in \mathcal{Y}$ of it and denote the corresponding test non-conformity score as $R ( X_{n + 1} , y )$, a function of the random variable $X_{n + 1}$ and the fixed variable $y$.
Subsequently, the framework computes an acceptance score threshold using only the calibration data; this is denoted by $\thr$ and is set to the $\lceil (1 - \alpha) (n + 1) \rceil$'th smallest value in the augmented set of calibration scores $\{ R_{1} , \ldots , R_{n} , \infty \}$.
Then, the label $y$ is included in the prediction set if its corresponding test non-conformity score is below this acceptance threshold, i.e.,~the split conformal prediction set is defined as,
\begin{equation}
    \ps = \left\{ y \in \mathcal{Y} : \thr \geq R ( X_{n + 1} , y ) \right\}.
\label{equation:conformal-set}
\end{equation}
These prediction sets satisfy~\cref{equation:conformal-error}~\citep{papadopoulos2002inductive,vovk2005algorithmic,lei2015conformal}.

Note that there is a naive way of satisfying the constraint in~\cref{equation:conformal-error}.
Consider the predictor that predicts the entire space of labels; this satisfies the constraint, but the predictions are uninformative.
Therefore,~\emph{the size of the prediction sets plays an important role in a conformal predictor's efficacy}---the smaller the size of the prediction sets, the better the predictor.

\section{RELATED WORK}

Since its original development~\citep{gammerman1998learning,saunders1999transduction}, many works have built on the basic conformal prediction framework.
Notably,~\citet{papadopoulos2002inductive,vovk2005algorithmic,lei2015conformal} proposed the aforementioned split conformal prediction as a special case.
We refer readers to~\citet{angelopoulos2023conformal} for a comprehensive overview.

The work done on conformal prediction set sizes has mostly focused on the statistical optimality of the family of predictors defined under this framework.
Such works use the expected size of the prediction sets as a notion of~\emph{inefficiency} for the predictors---the smaller the better.
They show that conformal predictors are~\emph{optimal} under different settings---such as unsupervised learning~\citep{lei2013distribution-sets,lei2015conformal}, regression~\citep{lei2013distribution-regression}, binary classification~\citep{lei2014classification}, and multi-class classification~\citep{sadinle2019least}---by showing that the expected size of their prediction sets asymptotically converges to that of an oracle.
Additionally,~\citet{vovk2014conformal,vovk2016criteria,sadinle2019least} provide similar optimality results when the probability of error is constrained conditionally per class/label.

This notion of viewing the expected size of the prediction sets as an inefficiency has propagated to practical settings as well, where conformal predictors are compared based on their average empirical prediction set sizes.
Furthermore, different non-conformity functions have been proposed to reduce this quantity.
For instance,~\citet{romano2019conformalized,kivaranovic2020adaptive} propose using quantile regression to train the machine learning model and an associated quantile interval loss function,~\citet{sadinle2019least,romano2020classification,angelopoulos2021uncertainty} propose loss functions for classification based on the predicted class/label probabilities, and~\citet{bellotti2021optimized,stutz2022learning} learn the non-conformity function in an end-to-end fashion by making the conformal pipeline differentiable.

\section{THEORETICAL QUANTIFICATION}
\label{section:size-quantification}

\begin{table*}[ht]
    \caption{
        \textbf{Multiplicative factor under different settings.}
        We summarize the multiplicative factor under different settings that we will utilize in~\cref{section:experiments} for the experiments.
        Most settings utilize $M ( x )$, the model prediction for input $x$; for LAC~\citep{sadinle2019least}, $M_{y} ( x )$ is the predicted probability for label $y$ and for CQR~\citep{romano2019conformalized}, $M_{\beta} ( x )$ is the prediction for the $\beta$'th level quantile and $M_{\Delta} ( x ) = ( M_{1 - \alpha / 2} ( x ) - M_{\alpha / 2} ( x ) ) / 2$.
        Details of these settings and the derivations of their multiplicative factors are provided in~\cref{appendix:multiplicative-factor}.
    }
    \label{table:multiplicative-factor}
    \vspace{-10pt}
    \begin{center}
    \begin{small}
    \begin{tabular}{lccc}
        \toprule
        \textbf{Problem type} & \textbf{Loss function} & \textbf{Non-conformity function} & \textbf{Multiplicative factor} \\
        & & $\pmb{R ( x , y )}$ & $\pmb{\mf}$ \\
        \midrule
        \multirow{4}{*}{\thead[l]{\small Regression \\ \small ($\mathcal{Y} = \mathds{R}$)}} & $l_{1}$ & $\size{M ( x ) - y}$ & $2$ \\
        & $l_{p \geq 1}$ & $\size{M ( x ) - y}^{p}$ & $2 r^{1 / p - 1} / p$ \\
        & \thead{\small CQR \\ \small \citep{romano2019conformalized}} & \thead{\small $\max \{ M_{\alpha / 2} ( x ) - y , \quad \quad$ \\ \small $\quad \quad y - M_{1 - \alpha / 2} ( x ) \}$} & $\begin{cases} 2 , & r \geq 0 \\ 2 \left( 1 - \cdf{M_{\Delta} ( X_{n + 1} )}{- r} \right) , & r < 0 \end{cases}$ \\
        \midrule
        \multirow{3}{*}{\thead[l]{\small Classification \\ \small (discrete $\mathcal{Y}$)}} & 0-1 & $\ind{M ( x ) \neq y}$ & $\begin{cases} 1 , & r = 0 \\ \size{\mathcal{Y}} - 1 , & r = 1 \end{cases}$ \\
        & \thead{\small LAC \\ \small \citep{sadinle2019least}} & $1 - M_{y} ( x )$ & $\sum_{y \in \mathcal{Y}} \pdf{M_{y} ( X_{n + 1} )}{1 - r}$ \\
        \bottomrule
    \end{tabular}
    \end{small}
    \end{center}
    \vspace{-10pt}
\end{table*}

We are interested in analyzing the size of prediction sets under the split conformal framework.
Similar to previous works, we will analyze the expected prediction set size $\mathds{E} [ \lvert \hat{C}_{\alpha}^{R} ( X_{n + 1} ; Z_{1 : n} ) \rvert ]$.\footnote{We overload $\lvert \cdot \rvert$ and $\int_{\mathcal{Y}} d y$ to be the Lebesgue measure for continuous and the counting measure for discrete spaces.}
While theoretical works considered the asymptotic behavior of this quantity, empirical works want to estimate it in practice.
We aim to bridge the two by theoretically quantifying the expected set size in the finite-sample case and deriving procedures to estimate it in practice, providing useful practical information about the prediction sets.

We begin with the quantification.
The prediction set size is the reference measure of the set of labels included.
However, from~\cref{equation:conformal-set} we know that the prediction set depends on:~(i) the test non-conformity scores $R ( X_{n + 1} , y )$, for all labels $y \in \mathcal{Y}$, (ii) the calibration non-conformity scores $R_{1 : n}$, and (iii) the significance level $\alpha$.
This implies that the prediction set is dependent on the test datum feature and the calibration data only through their corresponding non-conformity scores.
Therefore, instead of considering the label space, we analyze the space of non-conformity scores.

As a result, we compute the reference measure of the set of non-conformity scores below the acceptance score threshold $\thr$, as determined by the framework.
To translate this measure back to the label space, we introduce a multiplicative factor $\mf$, which we discuss in detail later.
With the multiplicative factor $\mf$ and the acceptance score threshold $\thr$, we show in~\cref{theorem:size-quantification} that the expected prediction set size is,
\begin{equation}
    \epss \!=\! \int_{\mathcal{R}} \pr{\thr \geq r} \mf d r,
\label{equation:size-quantification-non-iid}
\end{equation}
where the probability $\mathds{P} \{ \thr \geq r \}$ is over the calibration data $Z_{1 : n}$ and is not easy to compute.

To simplify it further, we assume that the calibration non-conformity scores are i.i.d.~from a probability distribution with the corresponding probability density/mass function $p_{R}$.\footnote{For a random variable $X$, we use $p_{X}$ and $P_{X}$ to denote the probability density/mass function and the cumulative distribution function respectively.}
Note that we are given i.i.d.~calibration data $Z_{1 : n}$, so their corresponding non-conformity scores $R_{1 : n}$ are i.i.d.~as well; here we define the distribution they follow.
With this, the individual probabilities of each calibration score being strictly smaller than $r$ are identical.
We denote this by $\tilde{P}_{R} ( r )$ and define it as,
\begin{equation}
    \tildecdf{R}{r} = \pr{R_{1} < r} = \int_{\mathcal{R}} \ind{r^{\prime} < r} \pdf{R}{r^{\prime}} d r^{\prime},
\label{equation:calibration-scores-tildecdf}
\end{equation}
noting that this is similar to, but not the same as, the cumulative distribution function $P_{R} ( r ) = \mathds{P} \{ R_{1} \leq r \}$.

As we are concerned with the event that the acceptance score threshold, i.e.,~the $\lceil (1 - \alpha) (n + 1) \rceil$'th smallest calibration score, is larger than $r$, we can allow at most $n_{\alpha} = \lceil (1 - \alpha) (n + 1) \rceil - 1$ calibration scores to be strictly less than $r$.
In doing so, we can now express $\mathds{P} \{ \thr \geq r \}$ as the cumulative distribution function of a binomial random variable.
The binomial random variable has $n$ trials and success probability $\tilde{P}_{R} ( r )$, which we denote by $B ( n , \tilde{P}_{R} ( r ) )$.
The required cumulative distribution function is evaluated at $n_{\alpha}$, and is denoted by $P_{B ( n , \tilde{P}_{R} ( r ) )} ( n_{\alpha} )$.
Therefore, the expected prediction set size in~\cref{equation:size-quantification-non-iid} simplifies to,
\begin{equation}
    \epss = \int_{\mathcal{R}} \cdf{\bin{n}{\tildecdf{R}{r}}}{n_{\alpha}} \mf d r.
\label{equation:size-quantification-iid}
\end{equation}

We package these two results together in the following theorem and provide the proof in~\cref{appendix:proof-size-quantification}.

\begin{theorem}[Expected size of prediction sets]

If the test and the calibration non-conformity scores are independent of each other, then the expected size of the split conformal prediction sets is given by~\cref{equation:size-quantification-non-iid}.
Furthermore, if the calibration non-conformity scores are i.i.d.,~then the expected size is given by~\cref{equation:size-quantification-iid}.

\label{theorem:size-quantification}
\end{theorem}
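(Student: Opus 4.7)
The plan is to express the size as an indicator integral over $\mathcal{Y}$, use Tonelli to pull the expectation through, separate the threshold's distribution from that of the test score using the independence hypothesis, and finally change variables from labels to scores so that the multiplicative factor $\mf$ appears. The iid refinement then follows by computing $\pr{\thr \geq r}$ directly from the order-statistic/binomial structure built into the split conformal construction.

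Concretely, starting from \cref{equation:conformal-set} I would write $\size{\ps} = \int_{\mathcal{Y}} \ind{\thr \geq R ( X_{n + 1} , y )} \, dy$, and apply Tonelli's theorem (valid by nonnegativity) to interchange the joint expectation over $( X_{n + 1} , Z_{1 : n} )$ with the $dy$-integral, obtaining $\epss = \int_{\mathcal{Y}} \pr{\thr \geq R ( X_{n + 1} , y )} \, dy$. Conditioning on $X_{n + 1}$ and invoking the theorem's hypothesis that the test and calibration non-conformity scores are independent, $\thr$ is independent of $R ( X_{n + 1} , y )$, so the conditional probability becomes $\pr{\thr \geq r}$ evaluated at $r = R ( X_{n + 1} , y )$. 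I would then change variables from $y$ to $r$ by pushing the reference measure on $\mathcal{Y}$ forward through $y \mapsto R ( X_{n + 1} , y )$ and taking the expected density of this (random) pushforward with respect to the reference measure on $\mathcal{R}$; this expected density is precisely the multiplicative factor $\mf$, whose closed-form evaluations for each loss in \cref{table:multiplicative-factor} are deferred to \cref{appendix:multiplicative-factor}. A final Fubini swap gives \cref{equation:size-quantification-non-iid}.

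For the iid strengthening, only $\pr{\thr \geq r}$ needs to be identified. By construction, $\thr$ is the $\lceil (1 - \alpha) (n + 1) \rceil$-th smallest element of the augmented multiset $\{ R_{1} , \ldots , R_{n} , \infty \}$, so $\thr \geq r$ is equivalent to at most $n_{\alpha} = \lceil (1 - \alpha) (n + 1) \rceil - 1$ of those $n + 1$ values being strictly less than $r$; the adjoined $\infty$ never contributes, so the constraint is on how many of $R_{1} , \ldots , R_{n}$ fall strictly below $r$. Under the iid assumption, $\ind{R_{i} < r}$ are iid Bernoulli trials with success probability $\tildecdf{R}{r}$ as in \cref{equation:calibration-scores-tildecdf}, so their sum is $\bin{n}{\tildecdf{R}{r}}$ and $\pr{\thr \geq r} = \cdf{\bin{n}{\tildecdf{R}{r}}}{n_{\alpha}}$; substituting into \cref{equation:size-quantification-non-iid} yields \cref{equation:size-quantification-iid}.

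The main obstacle is the change-of-variables step, since $y \mapsto R ( X_{n + 1} , y )$ is neither injective nor smooth in general (e.g.\ $\size{M ( x ) - y}$ folds two labels onto each positive score, while the $0$--$1$ loss collapses whole classes), ruling out any naive Jacobian argument. The cleanest resolution is to \emph{define} $\mf$ as the density of the expected pushforward measure (when it exists with respect to the reference measure on $\mathcal{R}$) and verify case-by-case in the appendix that this abstract definition produces the explicit entries of \cref{table:multiplicative-factor}; handling the regression (Lebesgue $\mathcal{R}$) and classification (counting $\mathcal{R}$) cases uniformly is the main bookkeeping required.
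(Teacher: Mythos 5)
Your proposal is correct and follows essentially the same route as the paper: express the size as an indicator integral over $\mathcal{Y}$, swap expectation and integration, use independence to factor out $\pr{\thr \geq r}$, and identify $\mf$ as the density (when it exists) of the measure $A \mapsto \int_{\mathcal{Y}} \cdf{R ( X_{n + 1} , y )}{A} \, d y$ --- which is exactly your ``expected pushforward'' and the paper's $\#_{R}$ --- before reducing $\pr{\thr \geq r}$ to the binomial CDF $\cdf{\bin{n}{\tildecdf{R}{r}}}{n_{\alpha}}$ via the order-statistic characterization of the threshold. Your handling of the non-injectivity obstacle (defining $\mf$ measure-theoretically and deferring the closed forms to case-by-case verification) is precisely how the paper resolves it.
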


These results pertain to the marginal expected set size.
We include conditional expectations in~\cref{subsection:size-quantification-conditional}.

\paragraph{Multiplicative Factor}

The multiplicative factor is responsible for translating the reference measure on the space of non-conformity scores to the reference measure on the label space.
Formally, we define this factor as,
\begin{equation}
    \mf = \int_{\mathcal{Y}} \pdf{R ( X_{n + 1} , y )}{r} d y,
\label{equation:multiplicative-factor}
\end{equation}
where $p_{R ( X_{n + 1} , y )}$ is the probability density/mass function of the random variable $R ( X_{n + 1} , y )$, with the randomness from the random variable $X_{n + 1}$ (the test datum feature) and not the fixed variable $y$ (a label).

We provide derivations of the multiplicative factor under different settings in~\cref{appendix:multiplicative-factor} and summarize them in~\cref{table:multiplicative-factor}.
A common loss for regression is the $l_{1}$ loss, with the multiplicative factor 2.
We generalize to any $l_{p}$ loss ($p \geq 1$), with the multiplicative factor $2 r^{1 / p - 1} / p$; this highlights the reference measure translation that the multiplicative factor performs.
The 0-1 loss is a candidate for classification, with the multiplicative factor 1 and $\lvert \mathcal{Y} \vert - 1$ for $r = 0$ and $r = 1$ respectively.
Alternatively, more nuanced non-conformity functions have multiplicative factors that depend on the distribution of data and the machine learning model used.

For instance,~\citet{sadinle2019least} propose the least ambiguous set-valued classifiers (LAC) with the non-conformity function $R ( x , y ) = 1 - M_{y} ( x )$, where $M_{y} ( x )$ is the predicted probability for label $y$.
The associated multiplicative factor (cf.~\cref{table:multiplicative-factor}) cannot be analytically solved without making assumptions about the data distribution and/or the machine learning model.
However, LAC provably constructs prediction sets with minimum expected size if the predicted probabilities are correct; this does not hold in practice, but the predicted sets are small.
Similarly, conformalized quantile regression (CQR,~\citet{romano2019conformalized}) for regression and adaptive prediction sets (APS,~\citet{romano2020classification}) for classification construct small prediction sets, but their associated multiplicative factors are intractable without further assumptions.
This does not come as a surprise; the split conformal framework satisfies~\cref{equation:conformal-error}, but the quality of the prediction sets constructed depends on the data distribution and the machine learning model used~\citep{vovk2005algorithmic,shafer2008tutorial}.

We do not wish to make additional assumptions, so we treat the multiplicative factor associated with such non-conformity functions as unknown.
Note that~\cref{theorem:size-quantification} holds for any choice of the non-conformity function.

\subsection{Insights}
\label{subsection:size-quantification-insights}

After quantifying the expected prediction set size in~\cref{theorem:size-quantification}, we analyze its dependence on various user-specified parameters, providing general insights into influencing the quantity from a user's perspective.
We empirically validate our analysis in~\cref{appendix:experiments-uci-insights}.

\paragraph{Non-conformity Function}

The non-conformity function (constituting the machine learning model and the loss function) plays an important role.
Its influence on the set size is through the binomial random variable's success probability $\tilde{P}_{R} ( r )$ and the multiplicative factor $\mf$ (cf.~\cref{equation:size-quantification-iid}).
For example, for regression, $l_{1}$ and CQR~\citep{romano2019conformalized} use different machine learning models (with CQR using quantile regression models) and different loss functions ($l_{1}$ versus CQR loss).
For classification, 0-1, LAC~\citep{sadinle2019least}, and APS~\citep{romano2020classification} are different loss functions that could be used atop the same machine learning model.
Such modifications alter $\tilde{P}_{R} ( r )$ and $\mf$ and hence the expected set size (cf.~\cref{section:experiments}).

There are scenarios where the non-conformity function can change, but the multiplicative factor does not.
E.g.,~under the $l_{1}$ loss for regression, one can change the machine learning model, but $\mf = 2$ remains the same (cf.~\cref{table:multiplicative-factor}).
In such cases, only the influence through the binomial random variable's success probability $\tilde{P}_{R} ( r )$ matters.
Consider $\tilde{P}_{R_{1}}$ and $\tilde{P}_{R_{2}}$ corresponding to two such non-conformity functions $R_{1}$ and $R_{2}$ respectively, where the non-conformity score distribution of the first first-order stochastically dominates the second, i.e.,~$\tilde{P}_{R_{1}} ( r ) \leq \tilde{P}_{R_{2}} ( r )$, for all $r \in \mathcal{R}$.
Consequently, the expected set size is larger for the first function, $\mathds{E} [ \lvert \hat{C}^{R_{1}}_{\alpha} \left( X_{n + 1} ; Z_{1 : n} \right) \rvert ] \geq \mathds{E} [ \lvert \hat{C}^{R_{2}}_{\alpha} \left( X_{n + 1} ; Z_{1 : n} \right) \rvert ]$.
Therefore, for small expected set sizes, with the multiplicative factor being the same, $\tilde{P}_{R}$ should be skewed to have most of its probability density/mass on small values of $r \in \mathcal{R}$.
A common recipe to achieve this is by using a machine learning model that generalizes well.
As an analytical tool, a practitioner could also plot the empirical distribution of the calibration non-conformity scores to compare different non-conformity functions.

When the multiplicative factor changes, it is not straightforward to compare different non-conformity functions without making further assumptions.
This is especially the case when the space of non-conformity scores is modified in the process.
E.g.,~for regression, the $l_{1}$ loss admits non-negative non-conformity scores only, whereas CQR~\citep{romano2019conformalized} additionally permits negative scores.
If we were able to enforce and/or assume that CQR's scores smaller than $- c$ (for some constant $c \geq 0$) are never permissible, we could translate its space of non-conformity scores to the set of non-negative reals $\mathds{R}_{\geq 0}$ by adding an offset of $c$.
Then, offsetted CQR's multiplicative factor would never be larger than that of the $l_{1}$ loss (cf.~\cref{table:multiplicative-factor}); additionally, if the $l_{1}$ loss non-conformity score distribution first-order stochastically dominates that of offsetted CQR's scores, CQR would achieve a smaller expected set size.
Comparisons under other modifications of the multiplicative factor would require different assumptions.

\paragraph{Significance Level}

The significance level is generally fixed by a user rather than being tunable, specifying the user's requirement on the frequency of error.
However, we use this as a sanity check and highlight the trade-off between the error and the size of the prediction sets.
Intuitively, as the significance level increases, the framework allows for more errors which decreases the size of its prediction sets.
Indeed, in~\cref{equation:size-quantification-iid}, an increase in $\alpha$ causes a decrease in $n_{\alpha}$, which further prompts a decrease in the expected prediction set size.

\paragraph{Number of Calibration Data}

Labeled data procurement is often difficult, and a user might need justification for the benefits of collecting more data.
When using it for calibration, it is unclear how it would influence the expected prediction set size.
In~\cref{equation:size-quantification-iid}, an increase in $n$ causes an increase in both the number of trials of the binomial random variable and the value at which the cumulative distribution function of the said random variable is evaluated; the former decreases the expected set size while the latter increases it, diminishing their contributions.
Resolving this disagreement would require making more assumptions about the distribution of the calibration non-conformity scores.

\subsection{Conditional Expectation}
\label{subsection:size-quantification-conditional}

We quantified the marginal expected prediction set size in~\cref{theorem:size-quantification}, where we marginalize over the randomness induced by both the test datum feature and the calibration data.
For instance, when advertising its prediction system to potential customers, our company in~\cref{section:intro} computes the marginal expected set size as there is no additional information to condition on.
However, customers might be interested in evaluating the quality of the prediction sets constructed on a particular test input.
In this case, the marginal expected set size is not the quantity of interest, but the conditional expected set size conditioned on that test datum is.
We therefore extend~\cref{theorem:size-quantification} to allow for conditional expectations of the prediction set size.

When conditioning on the test datum feature $X_{n + 1} = x_{n + 1}$, there is additional information about the distribution of $R ( X_{n + 1} , y )$ and hence the multiplicative factor.
Synonymous with our definition of the multiplicative factor in~\cref{equation:multiplicative-factor}, we introduce the feature-specific multiplicative factor $\#_{R} ( r ; x_{n + 1} ) = \int_{\mathcal{Y}} \delta_{R ( x_{n + 1} , y )} ( r ) d y$, where the probability density/mass function $p_{R ( X_{n + 1} , y )}$ is replaced by $\delta_{R ( x_{n + 1} , y )}$, the Dirac delta distribution that places all of its probability mass on $R ( x_{n + 1} , y)$.
Then, the conditional expected set size is given by,
\begin{align}
    \begin{split}
        & \ev{\size{\ps} \middle\vert X_{n + 1} = x_{n + 1}} \\
        & = \int_{\mathcal{R}} \pr{\thr \geq r} \#_{R} \left( r ; x_{n + 1} \right) d r,
    \end{split}
    \label{equation:size-quantification-conditional-test-non-iid}
    \intertext{and if the calibration non-conformity scores are i.i.d.,}
    & = \int_{\mathcal{R}} \cdf{\bin{n}{\tildecdf{R}{r}}}{n_{\alpha}} \#_{R} \left( r ; x_{n + 1} \right) d r.
    \label{equation:size-quantification-conditional-test-iid}
\end{align}
Note that~\cref{equation:size-quantification-conditional-test-non-iid,equation:size-quantification-conditional-test-iid} are analogous to the marginals in~\cref{equation:size-quantification-non-iid,equation:size-quantification-iid} respectively.
We summarize this below (with the proof in~\cref{appendix:proof-size-quantification-conditional-test}).

\begin{corollary}[Expected size of prediction sets conditioned on the test datum feature]

If the test and the calibration non-conformity scores are independent of each other, then the expected size of the split conformal prediction sets conditioned on the test datum feature $X_{n + 1} = x_{n + 1}$ is given by~\cref{equation:size-quantification-conditional-test-non-iid}.
Furthermore, if the calibration non-conformity scores are i.i.d.,~then the conditional expected size is given by~\cref{equation:size-quantification-conditional-test-iid}.

\label{corollary:size-quantification-conditional-test}
\end{corollary}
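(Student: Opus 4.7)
The plan is to adapt the proof of \cref{theorem:size-quantification} by fixing $X_{n+1} = x_{n+1}$ from the outset and tracking how the removal of the test-datum marginalization propagates through each step. First, I would write the prediction set size as a label-space integral, $\size{\ps} = \int_{\mathcal{Y}} \ind{\thr \geq R(X_{n+1}, y)} d y$, and then take the conditional expectation given $X_{n+1} = x_{n+1}$. Since the integrand is non-negative, Fubini--Tonelli lets me swap the conditional expectation with the $y$-integral, reducing the problem to computing $\ev{\ind{\thr \geq R(x_{n+1}, y)} \mid X_{n+1} = x_{n+1}}$ for each $y$.

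Next, by the assumed independence of the test and the calibration non-conformity scores, $\thr$ is independent of $X_{n+1}$, so the conditioning is vacuous and the inner expectation collapses to $\pr{\thr \geq R(x_{n+1}, y)}$. This yields the intermediate expression $\int_{\mathcal{Y}} \pr{\thr \geq R(x_{n+1}, y)} d y$, with no remaining randomness from the test feature.

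Then, to match the form in \cref{equation:size-quantification-conditional-test-non-iid}, I would reintroduce an $r$-integral via the identity $\pr{\thr \geq R(x_{n+1}, y)} = \int_{\mathcal{R}} \pr{\thr \geq r} \delta_{R(x_{n+1}, y)}(r) d r$, then exchange the $y$ and $r$ integrals by Fubini--Tonelli once more. Collecting the inner $y$-integral into $\#_R(r; x_{n+1})$ gives \cref{equation:size-quantification-conditional-test-non-iid}. For the i.i.d.~case, I would reuse the binomial argument from the proof of \cref{theorem:size-quantification}: the event $\{\thr \geq r\}$ is equivalent to at most $n_{\alpha}$ of the i.i.d.~calibration scores being strictly less than $r$, so $\pr{\thr \geq r} = \cdf{\bin{n}{\tildecdf{R}{r}}}{n_{\alpha}}$; substituting yields \cref{equation:size-quantification-conditional-test-iid}.

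The main obstacle is the Dirac-delta manipulation in the third step, which is purely formal when $\mathcal{Y}$ (and hence $\mathcal{R}$) is continuous. Making it rigorous amounts to viewing the construction as a pushforward of the reference measure on $\mathcal{Y}$ through the map $y \mapsto R(x_{n+1}, y)$, which is exactly the same technical device already used to define $\mf$ in the marginal result; once that interpretation is fixed, the integral swap is immediate. Everything else is essentially a re-run of the unconditional proof with the extra conditioning carried along, so no new probabilistic ideas are needed.
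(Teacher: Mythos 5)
Your proposal is correct and mirrors the paper's own proof step for step: write the set size as a label-space integral, swap expectation and integration, use independence of the test and calibration scores to drop the conditioning, reintroduce the $r$-integral via the Dirac measure $\delta_{R(x_{n+1},y)}(dr)$, collect the $y$-integral into the feature-specific multiplicative factor, and reuse the binomial argument for the i.i.d.\ case. Your remark on making the Dirac manipulation rigorous via the pushforward/transition-kernel view is exactly the device the paper uses.
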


We can similarly condition on the calibration data $Z_{1 : n} = z_{1 : n}$; we summarize the result in~\cref{corollary:size-quantification-conditional-calibration}.

\section{PRACTICAL ESTIMATION}
\label{section:size-estimation}

\begin{table*}[ht]
    \caption{
        \textbf{Practical estimates under different settings.}
        We summarize our point and interval estimates derived when the multiplicative factor is known (cf.~\cref{subsection:size-estimation-known-multiplicative-factor}) and when it is unknown (cf.~\cref{subsection:size-estimation-unknown-multiplicative-factor}).
    }
    \label{table:size-estimation}
    \vspace{-10pt}
    \begin{center}
    \begin{small}
    \begin{tabular}{lcc}
        \toprule
        \textbf{Setting} & \textbf{Our point estimate} & \textbf{Our interval estimate} (with significance $\gamma$) \\
        \midrule
        \thead[l]{\small Known \\ \small multiplicative factor} & $\int_{\mathcal{R}} \cdf{\bin{n}{\tildeempcdf{R}{r}}}{n_{\alpha}} \mf d r$ & \thead{\small $\left[ \int_{\mathcal{R}} \cdf{\bin{n}{\tildeempcdf{R}{r} + \Delta_{k , \gamma}}}{n_{\alpha}} \mf d r, \right. \quad \quad$ \\ \small $\quad \quad \left. \int_{\mathcal{R}} \cdf{\bin{n}{\tildeempcdf{R}{r} - \Delta_{k , \gamma}}}{n_{\alpha}} \mf d r \right]$} \\
        \midrule
        \thead[l]{\small Unknown \\ \small multiplicative factor} & $\int_{\mathcal{Y}} \frac{1}{k} \sum_{i = 1}^{k} \cdf{\bin{n}{\tildeempcdf{R}{R ( X^{\prime}_{i} , y )}}}{n_{\alpha}} d y$ & \thead{\small $\left[ \int_{\mathcal{Y}} \frac{1}{k} \sum_{i = 1}^{k} \cdf{\bin{n}{\tildeempcdf{R}{R ( X^{\prime}_{i} , y )} + \Delta_{k , \gamma}}}{n_{\alpha}} d y, \right. \quad \quad$ \\ \small $\quad \quad \left. \int_{\mathcal{Y}} \frac{1}{k} \sum_{i = 1}^{k} \cdf{\bin{n}{\tildeempcdf{R}{R ( X^{\prime}_{i} , y )} - \Delta_{k , \gamma}}}{n_{\alpha}} d y \right]$} \\
        \bottomrule
    \end{tabular}
    \end{small}
    \end{center}
    \vspace{-10pt}
\end{table*}

We theoretically quantify the expected prediction set size in~\cref{theorem:size-quantification}.
However, it assumes knowledge of the multiplicative factor $\mf$ and the binomial random variable's success probability $\tilde{P}_{R} ( r )$, for all non-conformity scores $r \in \mathcal{R}$.
While the former may be known under some settings (cf.~\cref{table:multiplicative-factor}), the latter is unknown in most practical scenarios as it relies on the distribution of the calibration non-conformity scores.

Currently, the expected set size is empirically estimated by averaging the size of the constructed prediction sets over multiple runs, i.e.,~a Monte Carlo average.
This equates to sampling a (pseudo) calibration data, obtaining conformal prediction sets on multiple (pseudo) test data, and repeating the process many times.
The average size of the obtained sets will estimate the expected set size; if repeated enough times, this estimate would be close to the true value.
For our company in~\cref{section:intro}, this involves collecting large amounts of labeled data from its customers and repeatedly executing the above procedure.
Furthermore, each such estimation scheme is instantiated with a fixed configuration of the significance level and the number of calibration data, resulting in an estimate that is configuration-specific.
Therefore, the company will need to carry out this Monte Carlo averaging scheme numerous times to obtain satisfactory estimates for varying values of the parameters.
This becomes infeasible in practice.

Alternatively, knowing the quantification of the expected prediction set size from~\cref{theorem:size-quantification}, we can develop procedures to estimate the value directly.
This will require data to be collected~\emph{only once}; we will assume access to $Z^{\prime}_{1} = ( X^{\prime}_{1} , Y^{\prime}_{1} ), \ldots, Z^{\prime}_{k} = ( X^{\prime}_{k} , Y^{\prime}_{k})$, $k$ data points drawn i.i.d.~from the data distribution.
Going back to our hypothetical company, possible ways of obtaining this data are either from a customer or held-out in-house company data.
We will detail procedures to derive point and interval estimates for the expected prediction set size using these accessible data points.
Our goals in doing so are:~(i) for the point estimate to be close to the expected set size, and (ii) for the interval to bound the expected set size with high probability.
We provide a summary in~\cref{table:size-estimation}.

Note that we consider the marginal expected set size, but our procedures can extend to the conditionals by substituting in the conditionally given quantities.

\subsection{Known Multiplicative Factor}
\label{subsection:size-estimation-known-multiplicative-factor}

We begin with the setting where the multiplicative factor can be analytically calculated and is known.
We compute the non-conformity scores for the $k$ accessible data points as $R^{\prime}_{i} = R ( Z^{\prime}_{i} )$, for $i = 1 , \ldots , k$.
We further use these non-conformity scores to empirically approximate $\tildecdf{R}{r}$, for all $r \in \mathcal{R}$, with the quantity,
\begin{equation}
    \tildeempcdf{R}{r} = \frac{1}{k} \sum\nolimits_{i = 1}^{k} \ind{R^{\prime}_{i} < r}.
\label{equation:calibration-score-tildecdf-estimate}
\end{equation}
By replacing $\tilde{P}_{R} ( r )$ with $\tilde{P}^{\text{emp}}_{R} ( r )$ in~\cref{equation:size-quantification-iid}, we obtain a point estimate for the expected set size.
$\tilde{P}^{\text{emp}}_{R}$ can also be used to estimate the expected set size under different significance levels and number of calibration data as $\tilde{P}_{R}$ is not dependent on these parameters.

We further provide guarantees for this estimate.
We use the work of~\citet{dvoretzky1956asymptotic,massart1990tight} to bound the difference between a cumulative distribution function and its empirical approximation.
Specifically, we can compute a $1 - \gamma$ confidence interval for $\tilde{P}_{R} ( r )$ of the form $[ \tilde{P}^{\text{emp}}_{R} ( r ) - \Delta_{k , \gamma}, \tilde{P}^{\text{emp}}_{R} ( r ) + \Delta_{k , \gamma}]$, where $\Delta_{k , \gamma} = \sqrt{\ln ( 2 / \gamma ) / 2 k}$.
Thus by replacing $\tilde{P}_{R} ( r )$ with $\tilde{P}^{\text{emp}}_{R} ( r ) \pm \Delta_{k , \gamma}$ in~\cref{equation:size-quantification-iid}, we obtain the lower-upper bounds corresponding to a $1 - \gamma$ confidence interval for the expected set size.
This is summarized in the following result
(with the proof in~\cref{appendix:proof-size-confidence-interval}).

\begin{corollary}[Confidence interval for the expected prediction set size]

Following~\cref{equation:size-quantification-iid} (\cref{theorem:size-quantification}), with a known multiplicative factor, the expected size of split conformal prediction sets lies in the interval,
\begin{equation}
\begin{split}
    \left[ \int_{\mathcal{R}} \cdf{\bin{n}{\tildeempcdf{R}{r} + \Delta_{k , \gamma}}}{n_{\alpha}} \mf d r, \right. \quad \quad \\
    \quad \quad \left. \int_{\mathcal{R}} \cdf{\bin{n}{\tildeempcdf{R}{r} - \Delta_{k , \gamma}}}{n_{\alpha}} \mf d r \right],
\end{split}
\label{equation:size-confidence-interval}
\end{equation}
with probability at least $1 - \gamma$.

\label{corollary:size-confidence-interval}
\end{corollary}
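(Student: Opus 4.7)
The plan is to upgrade the Dvoretzky--Kiefer--Wolfowitz--Massart (DKW) inequality, which uniformly bounds an empirical CDF against its population version, into a uniform sandwich on the integrand of~\cref{equation:size-quantification-iid}, and then integrate to recover the advertised interval for $\epss$.

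First I would apply DKW to the i.i.d.\ sample $R^{\prime}_{1}, \ldots, R^{\prime}_{k}$, obtaining that
$$\pr{\sup_{r \in \mathcal{R}} \size{\tildeempcdf{R}{r} - \tildecdf{R}{r}} > \Delta_{k,\gamma}} \leq \gamma,$$
with $\Delta_{k,\gamma} = \sqrt{\ln(2/\gamma) / 2k}$. A small subtlety worth flagging: DKW is classically stated for the right-continuous CDF $\cdf{R}{\cdot}$ rather than the strict-inequality version $\tildecdf{R}{\cdot}$ used throughout the paper, but the two sup-norm deviations coincide since the empirical and population CDFs jump together at atoms of $p_R$. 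Let $E$ denote the complementary high-probability event; on $E$ we have $\tildecdf{R}{r} \in [\tildeempcdf{R}{r} - \Delta_{k,\gamma},\, \tildeempcdf{R}{r} + \Delta_{k,\gamma}]$ simultaneously for every $r \in \mathcal{R}$.

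Next I would invoke the elementary monotonicity of the binomial CDF in its success probability: $p \mapsto \cdf{\bin{n}{p}}{n_{\alpha}}$ is non-increasing on $[0,1]$, extended by the natural conventions $\cdf{\bin{n}{p}}{n_{\alpha}} = 1$ for $p \leq 0$ and $\cdf{\bin{n}{p}}{n_{\alpha}} = 0$ for $p \geq 1$ (so clipping of $\tildeempcdf{R}{r} \pm \Delta_{k,\gamma}$ outside $[0,1]$ is harmless). Applying this pointwise in $r$ on the event $E$ yields
$$\cdf{\bin{n}{\tildeempcdf{R}{r}+\Delta_{k,\gamma}}}{n_{\alpha}} \leq \cdf{\bin{n}{\tildecdf{R}{r}}}{n_{\alpha}} \leq \cdf{\bin{n}{\tildeempcdf{R}{r}-\Delta_{k,\gamma}}}{n_{\alpha}}.$$
Since $\mf \geq 0$, multiplying by $\mf$ and integrating over $\mathcal{R}$ preserves both inequalities; by~\cref{theorem:size-quantification} the middle integral is exactly $\epss$, so on $E$ the expected set size lies in the interval of~\cref{equation:size-confidence-interval}, which has probability at least $1-\gamma$.

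I do not anticipate a serious obstacle here: the argument is just DKW, followed by monotonicity of the binomial CDF, followed by a non-negative integration. The only places requiring care are the left- versus right-continuous CDF distinction in the first step and endpoint clipping in the second, both cosmetic. The conceptual takeaway, which I would emphasize in the write-up, is that uniform control of a \emph{single} empirical CDF over $r$ is enough to transfer, via the integral representation in~\cref{theorem:size-quantification}, into a confidence interval for the expected prediction set size valid across all significance levels and calibration sizes.
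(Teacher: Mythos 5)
Your proof is correct and follows essentially the same route as the paper's: DKW for uniform control of the empirical CDF, monotonicity of $p \mapsto \cdf{\bin{n}{p}}{n_{\alpha}}$ in $p$, and integration against the non-negative measure $\mf \, d r$. The only difference is cosmetic: the paper handles the left-continuity of $\tildecdf{R}{\cdot}$ by applying DKW to the negated scores $V_{i} = - R_{i}$, so that $\tildecdf{R}{r} = 1 - \cdf{V}{- r}$ is expressed through a genuine right-continuous CDF, whereas you argue directly (correctly, since the left-limit deviations are pointwise limits of the right-continuous ones and hence obey the same uniform bound) that the strict-inequality versions inherit the DKW control.
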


As $k$ increases, the error term $\Delta_{k , \gamma}$ decreases and so does the width of the confidence interval.
Therefore, the larger $k$ is, the tighter the confidence interval gets.
In fact, as $k \rightarrow \infty$, the error term $\Delta_{k, \gamma} \rightarrow 0$ and the confidence interval contracts to our point estimate.

\subsection{Unknown Multiplicative Factor}
\label{subsection:size-estimation-unknown-multiplicative-factor}

Next, we consider the setting where the multiplicative factor is intractable due to its dependence on the data distribution and/or the machine learning model.
One way to get around this is to estimate the factor using density estimation methods and substitute in its value.

To provide a self-contained approach, we re-arrange the formulation of the expected set size in~\cref{equation:size-quantification-iid} to get rid of the multiplicative factor (cf.~\cref{appendix:proof-size-estimation-unknown-multiplicative-factor}).
Instead, the quantification contains the expectation over the random variable $R ( X_{n + 1} , y )$ as follows,
\begin{equation}
\begin{split}
    & \epss \\
    & = \int_{\mathcal{Y}} \ev{\cdf{\bin{n}{\tildecdf{R}{R ( X_{n + 1} , y )}}}{n_{\alpha}}} d y.
\end{split}
\label{equation:size-quantification-iid-unknown-multiplicative-factor}
\end{equation}
The expectation $\mathds{E} [ P_{B ( n , \tilde{P}_{R} ( R ( X_{n + 1} , y ) ) )} ( n_{\alpha} ) ]$ contains two unknowns:~$\tilde{P}_{R}$ inside the expectation, and the distribution of $R ( X_{n + 1} , y )$ over which the expectation is evaluated.
This can be empirically approximated using nested Monte Carlo methods~\citep{rainforth2018nesting} with the accessible data points:~we approximate $\tilde{P}_{R}$ with $\tilde{P}^{\text{emp}}_{R}$ (cf.~\cref{equation:calibration-score-tildecdf-estimate}), and the distribution of $R ( X_{n + 1} , y )$ with the samples $R ( X^{\prime}_{i} , y )$, for $i = 1, \ldots, k$.
This amounts to the approximation $\frac{1}{k} \sum_{i = 1}^{k} P_{B ( n , \tilde{P}^{\text{emp}}_{R} ( R ( X^{\prime}_{i} , y ) ) )} ( n_{\alpha} )$ for the expectation term.
Integrating this quantity over $y \in \mathcal{Y}$ results in a point estimate for the expected set size, as desired.

Additionally, we can compute intervals synonymous with~\cref{equation:size-confidence-interval} by replacing $\tilde{P}^{\text{emp}}_{R} ( R ( X^{\prime}_{i} , y ) )$ with $\tilde{P}^{\text{emp}}_{R} ( R ( X^{\prime}_{i} , y ) ) \pm \Delta_{k , \gamma}$ above.
However, these may not be valid confidence intervals due to the extra approximation; we refer readers to~\citet{rainforth2018nesting} for nested Monte Carlo estimates' error analysis.
Despite this, we demonstrate their practical utility in~\cref{section:experiments}.

\section{EXPERIMENTS}
\label{section:experiments}

\begin{table*}[ht!]
    \caption{
        \textbf{Marginal expected prediction set sizes.}
        We illustrate the marginal expected sizes of split conformal prediction sets using different non-conformity functions and UCI datasets.
        The estimates are obtained via Monte Carlo averaging, our point estimates, and our interval estimates (lower-upper bounds with $\gamma = 0.1$).
        We also compute the absolute error between our individual point estimates and the mean Monte Carlo average.
        We report the means and standard deviations.
        For classification, the number of classes/labels is provided in parentheses.
    }
    \label{table:uci-size}
    \vspace{-10pt}
    \begin{center}
    \begin{scriptsize}
    \begin{tabular}{lllC{1.8cm}C{1.8cm}C{1.8cm}C{1.8cm}C{1.8cm}}
        \toprule
        & & & \multicolumn{4}{c}{\textbf{Marginal expected prediction set size}} & \\
        \cmidrule(lr){4-7}
        & & \textbf{Dataset} & \textbf{Our interval lower bound} & \textbf{Monte Carlo average} & \textbf{Our point estimate} & \textbf{Our interval upper bound} & \textbf{Absolute error} \\
        \midrule
        \multirow{20}{*}{\rotatebox[origin=c]{90}{\textbf{Regression ($\pmb{\mathcal{Y} = \mathds{R}}$)}}} & \multirow{10}{*}{\rotatebox[origin=c]{90}{\thead{$\pmb{l_{1}}$}}} & Abalone & 1.87\textsubscript{0.07} & 2.19\textsubscript{0.09} & 2.19\textsubscript{0.09} & 2.71\textsubscript{0.12} & 0.07\textsubscript{0.05} \\
        & & AirFoil & 1.11\textsubscript{0.07} & 1.39\textsubscript{0.10} & 1.39\textsubscript{0.09} & 2.03\textsubscript{0.13} & 0.08\textsubscript{0.05} \\
        & & AirQuality & 0.01\textsubscript{0.00} & 0.02\textsubscript{0.00} & 0.02\textsubscript{0.00} & 0.02\textsubscript{0.00} & 0.00\textsubscript{0.00} \\
        & & BlogFeedback & 2.30\textsubscript{0.02} & 2.38\textsubscript{0.02} & 2.38\textsubscript{0.02} & 2.47\textsubscript{0.02} & 0.02\textsubscript{0.01} \\
        & & CTSlices & 0.17\textsubscript{0.01} & 0.18\textsubscript{0.01} & 0.18\textsubscript{0.01} & 0.20\textsubscript{0.01} & 0.01\textsubscript{0.00} \\
        & & FacebookComments & 0.38\textsubscript{0.01} & 0.41\textsubscript{0.01} & 0.41\textsubscript{0.01} & 0.44\textsubscript{0.01} & 0.01\textsubscript{0.00} \\
        & & OnlineNews & 2.77\textsubscript{0.03} & 2.91\textsubscript{0.03} & 2.91\textsubscript{0.03} & 3.07\textsubscript{0.03} & 0.03\textsubscript{0.02} \\
        & & PowerPlant & 0.64\textsubscript{0.01} & 0.70\textsubscript{0.02} & 0.70\textsubscript{0.02} & 0.78\textsubscript{0.02} & 0.01\textsubscript{0.01} \\
        & & Superconductivity & 0.92\textsubscript{0.02} & 1.02\textsubscript{0.02} & 1.02\textsubscript{0.02} & 1.13\textsubscript{0.03} & 0.02\textsubscript{0.01} \\
        & & WhiteWineQuality & 2.29\textsubscript{0.06} & 2.58\textsubscript{0.08} & 2.58\textsubscript{0.07} & 2.99\textsubscript{0.09} & 0.06\textsubscript{0.05} \\
        \cmidrule(lr){2-8}
        & \multirow{10}{*}{\rotatebox[origin=c]{90}{\thead{\textbf{CQR} \\ \citep{romano2019conformalized}}}} & Abalone & 1.81\textsubscript{0.06} & 2.17\textsubscript{0.98} & 2.16\textsubscript{0.17} & 2.44\textsubscript{0.05} & 0.15\textsubscript{0.09} \\
        & & AirFoil & 1.36\textsubscript{0.05} & 1.58\textsubscript{0.64} & 1.58\textsubscript{0.07} & 2.09\textsubscript{0.13} & 0.05\textsubscript{0.04} \\
        & & AirQuality & 0.02\textsubscript{0.00} & 0.02\textsubscript{0.11} & 0.02\textsubscript{0.00} & 0.02\textsubscript{0.00} & 0.00\textsubscript{0.00} \\
        & & BlogFeedback & 1.39\textsubscript{0.01} & 1.39\textsubscript{0.96} & 1.39\textsubscript{0.01} & 1.39\textsubscript{0.01} & 0.01\textsubscript{0.01} \\
        & & CTSlices & 0.28\textsubscript{0.01} & 0.28\textsubscript{0.50} & 0.28\textsubscript{0.01} & 0.28\textsubscript{0.01} & 0.01\textsubscript{0.00} \\
        & & FacebookComments & 0.55\textsubscript{0.04} & 0.56\textsubscript{2.27} & 0.55\textsubscript{0.04} & 0.55\textsubscript{0.04} & 0.03\textsubscript{0.03} \\
        & & OnlineNews & 2.88\textsubscript{0.02} & 2.96\textsubscript{0.77} & 2.96\textsubscript{0.02} & 3.07\textsubscript{0.03} & 0.02\textsubscript{0.01} \\
        & & PowerPlant & 0.69\textsubscript{0.01} & 0.73\textsubscript{0.26} & 0.73\textsubscript{0.01} & 0.79\textsubscript{0.01} & 0.01\textsubscript{0.01} \\
        & & Superconductivity & 0.79\textsubscript{0.01} & 0.82\textsubscript{0.71} & 0.82\textsubscript{0.01} & 0.85\textsubscript{0.01} & 0.01\textsubscript{0.01} \\
        & & WhiteWineQuality & 2.24\textsubscript{0.10} & 2.24\textsubscript{0.89} & 2.24\textsubscript{0.10} & 2.27\textsubscript{0.10} & 0.07\textsubscript{0.06} \\
        \midrule
        \multirow{30}{*}{\rotatebox[origin=c]{90}{\textbf{Classification (discrete $\pmb{\mathcal{Y}}$)}}} & \multirow{10}{*}{\rotatebox[origin=c]{90}{\thead{\textbf{0-1}}}} & APSFailure (2) & 1.00\textsubscript{0.00} & 1.00\textsubscript{0.00} & 1.00\textsubscript{0.00} & 1.00\textsubscript{0.00} & 0.00\textsubscript{0.00} \\
        & & Adult (2) & 2.00\textsubscript{0.00} & 2.00\textsubscript{0.00} & 2.00\textsubscript{0.00} & 2.00\textsubscript{0.00} & 0.00\textsubscript{0.00} \\
        & & Avila (12) & 1.00\textsubscript{0.00} & 1.00\textsubscript{0.00} & 1.00\textsubscript{0.00} & 1.00\textsubscript{0.00} & 0.00\textsubscript{0.00} \\
        & & BankMarketing (2) & 1.00\textsubscript{0.00} & 1.00\textsubscript{0.00} & 1.01\textsubscript{0.02} & 1.74\textsubscript{0.23} & 0.01\textsubscript{0.02} \\
        & & CardDefault (2) & 2.00\textsubscript{0.00} & 2.00\textsubscript{0.00} & 2.00\textsubscript{0.00} & 2.00\textsubscript{0.00} & 0.00\textsubscript{0.00} \\
        & & Landsat (6) & 1.05\textsubscript{0.21} & 4.79\textsubscript{2.14} & 4.47\textsubscript{1.31} & 6.00\textsubscript{0.01} & 1.07\textsubscript{0.82} \\
        & & LetterRecognition (26) & 1.00\textsubscript{0.00} & 1.00\textsubscript{0.00} & 1.00\textsubscript{0.01} & 6.50\textsubscript{5.85} & 0.00\textsubscript{0.01} \\
        & & MagicGamma (2) & 1.97\textsubscript{0.06} & 2.00\textsubscript{0.00} & 2.00\textsubscript{0.00} & 2.00\textsubscript{0.00} & 0.00\textsubscript{0.00} \\
        & & SensorLessDrive (11) & 1.00\textsubscript{0.00} & 1.00\textsubscript{0.00} & 1.00\textsubscript{0.00} & 1.00\textsubscript{0.00} & 0.00\textsubscript{0.00} \\
        & & Shuttle (7) & 1.00\textsubscript{0.00} & 1.00\textsubscript{0.00} & 1.00\textsubscript{0.00} & 1.00\textsubscript{0.00} & 0.00\textsubscript{0.00} \\
        \cmidrule(lr){2-8}
        & \multirow{10}{*}{\rotatebox[origin=c]{90}{\thead{\textbf{LAC} \\ \citep{sadinle2019least}}}} & APSFailure (2) & 0.91\textsubscript{0.01} & 0.93\textsubscript{0.26} & 0.93\textsubscript{0.01} & 0.93\textsubscript{0.00} & 0.00\textsubscript{0.01} \\
        & & Adult (2) & 1.09\textsubscript{0.01} & 1.11\textsubscript{0.32} & 1.11\textsubscript{0.01} & 1.14\textsubscript{0.01} & 0.01\textsubscript{0.00} \\
        & & Avila (12) & 0.91\textsubscript{0.00} & 0.93\textsubscript{0.26} & 0.93\textsubscript{0.01} & 0.95\textsubscript{0.01} & 0.00\textsubscript{0.00} \\
        & & BankMarketing (2) & 0.96\textsubscript{0.00} & 0.99\textsubscript{0.12} & 0.99\textsubscript{0.00} & 1.01\textsubscript{0.01} & 0.00\textsubscript{0.00} \\
        & & CardDefault (2) & 1.20\textsubscript{0.01} & 1.25\textsubscript{0.44} & 1.25\textsubscript{0.01} & 1.32\textsubscript{0.01} & 0.01\textsubscript{0.01} \\
        & & Landsat (6) & 0.96\textsubscript{0.01} & 1.02\textsubscript{0.25} & 1.02\textsubscript{0.02} & 1.10\textsubscript{0.02} & 0.01\textsubscript{0.01} \\
        & & LetterRecognition (26) & 0.94\textsubscript{0.01} & 0.97\textsubscript{0.32} & 0.97\textsubscript{0.01} & 1.02\textsubscript{0.01} & 0.01\textsubscript{0.00} \\
        & & MagicGamma (2) & 1.03\textsubscript{0.01} & 1.07\textsubscript{0.26} & 1.07\textsubscript{0.01} & 1.12\textsubscript{0.01} & 0.01\textsubscript{0.01} \\
        & & SensorLessDrive (11) & 0.90\textsubscript{0.00} & 0.91\textsubscript{0.29} & 0.91\textsubscript{0.00} & 0.92\textsubscript{0.00} & 0.00\textsubscript{0.00} \\
        & & Shuttle (7) & 0.99\textsubscript{0.00} & 0.99\textsubscript{0.12} & 0.99\textsubscript{0.00} & 0.99\textsubscript{0.00} & 0.00\textsubscript{0.00} \\
        \cmidrule(lr){2-8}
        & \multirow{10}{*}{\rotatebox[origin=c]{90}{\thead{\textbf{APS} \\ \citep{romano2020classification}}}} & APSFailure (2) & 0.91\textsubscript{0.00} & 0.92\textsubscript{0.33} & 0.92\textsubscript{0.00} & 0.93\textsubscript{0.00} & 0.00\textsubscript{0.00} \\
        & & Adult (2) & 1.20\textsubscript{0.01} & 1.23\textsubscript{0.50} & 1.23\textsubscript{0.01} & 1.26\textsubscript{0.01} & 0.01\textsubscript{0.00} \\
        & & Avila (12) & 1.15\textsubscript{0.02} & 1.22\textsubscript{0.69} & 1.22\textsubscript{0.02} & 1.29\textsubscript{0.03} & 0.02\textsubscript{0.01} \\
        & & BankMarketing (2) & 1.07\textsubscript{0.01} & 1.09\textsubscript{0.46} & 1.09\textsubscript{0.01} & 1.11\textsubscript{0.01} & 0.01\textsubscript{0.00} \\
        & & CardDefault (2) & 1.30\textsubscript{0.01} & 1.36\textsubscript{0.50} & 1.36\textsubscript{0.01} & 1.42\textsubscript{0.01} & 0.01\textsubscript{0.01} \\
        & & Landsat (6) & 1.22\textsubscript{0.03} & 1.32\textsubscript{0.78} & 1.32\textsubscript{0.03} & 1.46\textsubscript{0.04} & 0.03\textsubscript{0.02} \\
        & & LetterRecognition (26) & 2.26\textsubscript{0.06} & 2.49\textsubscript{2.63} & 2.49\textsubscript{0.07} & 2.77\textsubscript{0.08} & 0.05\textsubscript{0.04} \\
        & & MagicGamma (2) & 1.16\textsubscript{0.01} & 1.21\textsubscript{0.49} & 1.21\textsubscript{0.01} & 1.27\textsubscript{0.02} & 0.01\textsubscript{0.01} \\
        & & SensorLessDrive (11) & 0.93\textsubscript{0.01} & 0.95\textsubscript{0.39} & 0.95\textsubscript{0.01} & 0.97\textsubscript{0.01} & 0.00\textsubscript{0.00} \\
        & & Shuttle (7) & 0.89\textsubscript{0.00} & 0.90\textsubscript{0.31} & 0.90\textsubscript{0.00} & 0.91\textsubscript{0.00} & 0.00\textsubscript{0.00} \\
        \bottomrule
    \end{tabular}
    \end{scriptsize}
    \end{center}
    \vspace{-10pt}
\end{table*}

\begin{figure*}[ht]
    \begin{center}
    \begin{subfigure}[b]{\textwidth}
        \includegraphics[width=\textwidth]{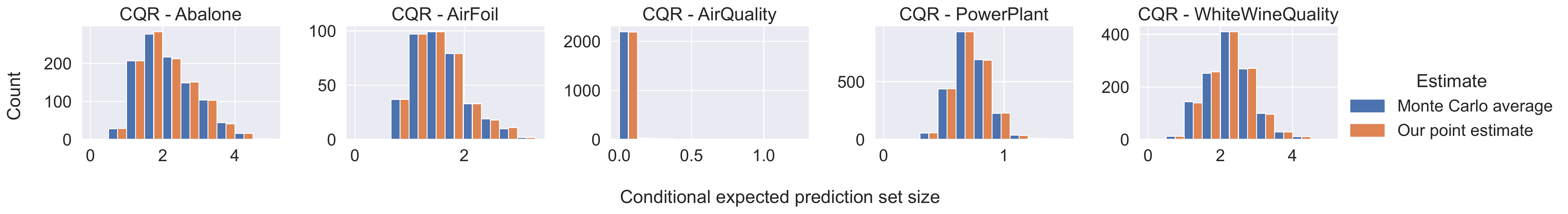}
    \end{subfigure}
    \begin{subfigure}[b]{\textwidth}
        \includegraphics[width=\textwidth]{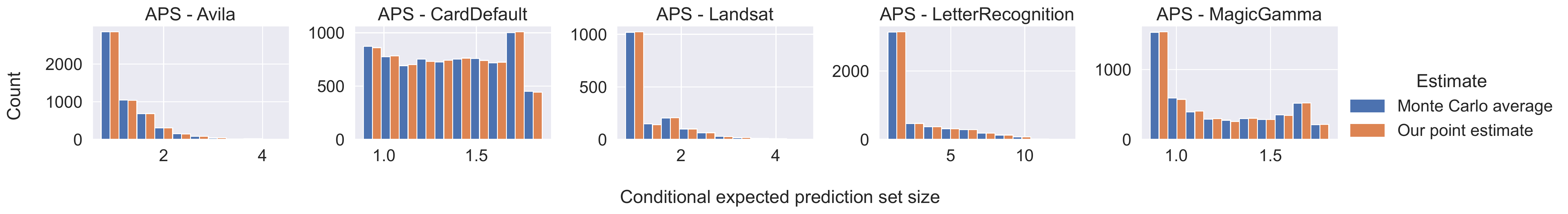}
    \end{subfigure}
    \end{center}
    \vspace{-10pt}
    \caption{
        \textbf{Expected prediction set sizes conditioned on the test datum feature.}
        We illustrate the expected sizes of split conformal prediction sets conditioned on varying test datum features using different UCI datasets.
        We use CQR~\citep{romano2019conformalized} for regression in the top row and APS~\citep{romano2020classification} for classification in the bottom row.
        The estimates are obtained via Monte Carlo averaging and our point estimates (refer to the legend for the color scheme); they are depicted as a histogram with side-by-side bars for comparison.
    }
    \label{figure:uci-test-conditional-size}
    \vspace{-10pt}
\end{figure*}

We now illustrate the efficacy of our results experimentally by applying our estimation procedures derived in~\cref{section:size-estimation} on real-world datasets from the UCI database~\citep{kelly2023uci}.
We use the $l_{1}$ loss and CQR~\citep{romano2019conformalized} non-conformity functions for regression, and the 0-1 loss, LAC~\citep{sadinle2019least}, and APS~\citep{romano2020classification} for classification.
We include the main experimental results here, with additional ones incorporated in~\cref{appendix:experiments-uci,appendix:experiments-synthetic}.
Our code is available at~\href{https://github.com/Guneet-Dhillon/expected-conformal-prediction-set-size}{https://github.com/Guneet-Dhillon/expected-conformal-prediction-set-size}.

\subsection{Experimental Setup}
\label{section:experiments-setup}

Similar to~\citet{tibshirani2019conformal}, we randomly split a dataset into 25\% training, 25\% calibration, and 50\% test.
We use the training data to train a random forest~\citep{breiman2001random} for the non-conformity function, utilizing the scikit-learn~\citep{pedregosa2011scikit} implementation with 100 trees.\footnote{For CQR~\citep{romano2019conformalized}, we train a quantile regression forest~\citep{meinshausen2006quantile} using the implementation from~\href{https://github.com/zillow/quantile-forest}{https://github.com/zillow/quantile-forest}.}
We run the split conformal algorithm on the calibration and the test data, with the significance level $\alpha$ set to 0.1.
We repeat this process 1000 times, where we sample new training data every 100 runs, and new calibration and test data every run.

\subsection{Marginal Expected Prediction Set Size}
\label{section:experiments-marginal}

We begin with the marginal expected size of prediction sets; we compare our derived estimates with the commonly used Monte Carlo averaging in approximating the marginal expected set size.
The latter is the average size of the prediction sets constructed using the split conformal algorithm on the test data using the calibration data, across all data splits.
For the former, we need accessible data points to make approximations.
While any data drawn i.i.d.~from the data distribution suffices, we use the calibration data as the accessible data to facilitate direct comparison of the two estimates:~Monte Carlo averaging uses both the test and the calibration data, whereas our estimates use only the calibration data.
Our estimation procedures provide both point and interval (with $\gamma = 0.1$) estimates; they are obtained from~\cref{subsection:size-estimation-known-multiplicative-factor} (with valid confidence intervals) for the $l_{1}$ and 0-1 loss non-conformity functions, and from~\cref{subsection:size-estimation-unknown-multiplicative-factor} for the other functions.

\cref{table:uci-size} illustrates the Monte Carlo average and our estimates for the marginal expected prediction set size.
The means of our point estimates are close to that of the Monte Carlo average, with the standard deviations being comparable or smaller despite using $3 \times$ fewer data points for the approximation.
This is also reflected in the low absolute error between our individual point estimates and the mean Monte Carlo average.
The only exception is when using the 0-1 loss non-conformity function on Landsat, but the standard deviations of the estimates are high under this setting.
Additionally, our interval estimates provide lower-upper bounds on the expected set size in practice (with the bounds being around our point estimate).
These results corroborate the efficacy of our derived practical estimates.

\subsection{Conditional Expected Prediction Set Size}
\label{section:experiments-conditional}

Next, we analyze the expected size of prediction sets conditioned on the test inputs.
Here, we consider the non-conformity functions CQR~\citep{romano2019conformalized} for regression and APS~\citep{romano2020classification} for classification, with other functions included in~\cref{appendix:experiments-uci-conditional}.

We follow a similar setup as before, but instead of using 50\% of the data as test, we use 25\% and fix them across the different data splits to compare the conditional expected prediction set sizes on these particular inputs; we will use the remaining 25\% as accessible data points.
As before, we compare our derived point estimate with Monte Carlo averaging.
In this case, the Monte Carlo average is the average size of the prediction sets constructed using the split conformal algorithm and the calibration data, across all data splits, on a fixed test datum.
On the other hand, we obtain our point estimate from~\cref{subsection:size-estimation-unknown-multiplicative-factor} using the accessible data (without having access to the calibration data), and condition on a fixed test datum feature with its feature-specific multiplicative factor (cf.~\cref{corollary:size-quantification-conditional-test}).

\cref{figure:uci-test-conditional-size} depicts histograms of the two estimates for the expected set sizes conditioned on varying test inputs.
The plots look identical for the two estimates, despite our point estimate not having seen the calibration data.
This further corroborates the efficacy of our estimates.

\section{CONCLUSIONS}

In this paper, we have studied the expected size of the prediction sets constructed by the split conformal framework.
We begin by theoretically quantifying the (marginal and conditional) expected prediction set size (cf.~\cref{section:size-quantification}).
Consequently, we derive practical estimation procedures that produce point estimates and high-probability interval bounds for the expected set size (cf.~\cref{section:size-estimation}); these procedures require data to be collected only once to produce reliable estimates.
Additionally, we corroborate our results experimentally on real-world regression and classification problems and demonstrate the efficacy of our estimates in practice.
Returning to our company and customers in~\cref{section:intro}, the company now has the tools to provide estimates of the expected set size, which allows potential customers to reliably evaluate the company's conformal system.

\subsubsection*{Acknowledgements}

Guneet S. Dhillon is supported by the Clarendon Fund Scholarship, University of Oxford.
Tom Rainforth is supported by the UK EPSRC grant EP/Y037200/1.

\bibliographystyle{abbrvnat}
\bibliography{main}

\newpage
\appendix
{
\onecolumn
\section{PROOFS}
\label{appendix:proofs}

\subsection{Proof for Theorem 1}
\label{appendix:proof-size-quantification}

\begin{proof}

The expected size of prediction sets under the split conformal prediction framework (cf.~\cref{equation:conformal-set}) is,
\begin{equation*}
\begin{split}
    \epss & = \ev{\size{\left\{ y \in \mathcal{Y} : \thr \geq R ( X_{n + 1} , y ) \right\}}} \\
    & = \ev{\int_{\mathcal{Y}} \ind{\thr \geq R ( X_{n + 1} , y )} d y} \\
    & = \int_{\mathcal{Y}} \pr{\thr \geq R ( X_{n + 1} , y )} d y \\
    & = \int_{\mathcal{Y}} \int_{\mathcal{R}} \pr{\thr \geq R ( X_{n + 1} , y ) \middle\vert R ( X_{n + 1} , y ) = r} \cdf{R ( X_{n + 1} , y )}{d r} d y,
\end{split}
\end{equation*}
where, for every label $y \in \mathcal{Y}$, we denote $P_{R ( X_{n + 1} , y )} ( d r )$ to be the law of the random variable $R ( X_{n + 1} , y )$, or equivalently, the push-forward of the marginal distribution of $X_{n + 1}$ under the mapping $X_{n + 1} \mapsto R ( X_{n + 1} , y )$.
In other words, $( y , A ) \in \mathcal{Y} \times \mathcal{B} ( \mathcal{R} ) \mapsto P_{R ( X_{n + 1} , y )} ( A )$ defines a transition kernel, where $\mathcal{B} ( \mathcal{R} )$ denotes the Borel $\sigma$-algebra of the space of non-conformity scores $\mathcal{R}$.
Continuing from above, we have that,
\begin{equation*}
\begin{split}
    \epss & = \int_{\mathcal{Y}} \int_{\mathcal{R}} \pr{\thr \geq R ( X_{n + 1} , y ) \middle\vert R ( X_{n + 1} , y ) = r} \cdf{R ( X_{n + 1} , y )}{d r} d y \\
    & \overset{(i)}{=} \int_{\mathcal{Y}} \int_{\mathcal{R}} \pr{\thr \geq r} \cdf{R ( X_{n + 1} , y )}{d r} d y \\
    & = \int_{\mathcal{R}} \pr{\thr \geq r} \int_{\mathcal{Y}} \cdf{R ( X_{n + 1} , y )}{d r} d y \\
    & = \int_{\mathcal{R}} \pr{\thr \geq r} \#_{R} ( d r ),
\end{split}
\end{equation*}
where $(i)$ follows from the test and the calibration non-conformity scores being independent of each other (since test and calibration data are independent of each other, so are their scores).
The measure $\#_{R}$ is defined as,
\begin{equation*}
    \#_{R} (A) = \int_{\mathcal{Y}} \cdf{R ( X_{n + 1} , y )}{A} d y,
\end{equation*}
for $A \in \mathcal{B} ( \mathcal{R} )$.
Note that $\#_{R} ( \mathcal{R} ) = \lvert \mathcal{Y} \rvert$, which may be infinite, for instance, when $\mathcal{Y} = \mathds{R}$.

If $\#_{R}$ is absolutely continuous w.r.t.~the reference measure, then $\#_{R} ( d r ) = \mf d r$.
If the law of $R ( X_{n + 1} , y )$ is absolutely continuous w.r.t.~the reference measure on $\mathcal{R}$, i.e.,~$P_{R ( X_{n + 1} , y )} ( d r ) = p_{R ( X_{n + 1} , y )} ( r ) d r$, then $\#_{R}$ is also absolutely continuous w.r.t.~the reference measure, with the following density,
\begin{equation*}
    \mf = \int_{\mathcal{Y}} \pdf{R ( X_{n + 1} , y )}{r} d y,
\end{equation*}
where we use the same symbol for the density.
This is the multiplicative factor, defined in~\cref{equation:multiplicative-factor}.

Continuing from above, we quantify the expected size of the prediction sets as follows,
\begin{equation*}
    \epss = \int_{\mathcal{R}} \pr{\thr \geq r} \mf d r,
\end{equation*}
which is the desired formulation in~\cref{equation:size-quantification-non-iid}.

Furthermore, the calibration non-conformity scores are i.i.d.~with the probability density/mass function $p_{R}$.
As a result, the individual (identical) probabilities for each calibration non-conformity score being strictly less than $r$ is $\tilde{P}_{R} ( r )$ (cf.~\cref{equation:calibration-scores-tildecdf}).
Additionally, the threshold $\thr$ is the $\lceil ( 1 - \alpha ) ( n + 1 ) \rceil$'th smallest value in the augmented set of calibration non-conformity scores $\{ R_{1} , \ldots , R_{n} , \infty \}$.
Then, for the event $\thr \geq r$ to occur, at most $n_{\alpha} = \lceil ( 1 - \alpha ) ( n + 1 ) \rceil - 1$ of the $n$ calibration non-conformity scores can be strictly smaller than $r$.
If we consider a calibration score being strictly smaller than $r$ as a success, we can simplify the event $\thr \geq r$ to the event $B ( n , \tilde{P}_{R} ( r ) ) \leq n_{\alpha}$, where $B ( n , \tilde{P}_{R} ( r ) )$ is a binomial random variable with $n$ trials and success probability $\tilde{P}_{R} ( r )$.
Finally, the probability of the event $\thr \geq r$ simplifies to the following,
\begin{equation*}
    \pr{\thr \geq r} = \pr{\bin{n}{\tildecdf{R}{r}} \leq n_{\alpha}} = \cdf{\bin{n}{\tildecdf{R}{r}}}{n_{\alpha}}.
\end{equation*}
Making the above simplification in~\cref{equation:size-quantification-non-iid} leads to the desired expected set size in~\cref{equation:size-quantification-iid}.

\end{proof}

\subsection{Proof for Corollary 2}
\label{appendix:proof-size-quantification-conditional-test}

\begin{proof}

The expected size of split conformal prediction sets conditioned on the test datum feature $X_{n + 1} = x_{n + 1}$ is,
\begin{equation*}
\begin{split}
    \ev{\size{\ps} \middle\vert X_{n + 1} = x_{n + 1}} & = \ev{\size{\left\{ y \in \mathcal{Y} : \thr \geq R ( X_{n + 1} , y ) \right\}} \middle\vert X_{n + 1} = x_{n + 1}} \\
    & = \ev{\int_{\mathcal{Y}} \ind{\thr \geq R ( X_{n + 1} , y )} d y \middle\vert X_{n + 1} = x_{n + 1}} \\
    & = \int_{\mathcal{Y}} \pr{\thr \geq R ( X_{n + 1} , y ) \middle\vert X_{n + 1} = x_{n + 1}} d y \\
    & \overset{(i)}{=} \int_{\mathcal{Y}} \pr{\thr \geq R ( x_{n + 1} , y )} d y \\
    & = \int_{\mathcal{Y}} \int_{\mathcal{R}} \pr{\thr \geq r} \delta_{R ( x_{n + 1} , y )} ( d r ) d y \\
    & = \int_{\mathcal{R}} \pr{\thr \geq r} \int_{\mathcal{Y}} \delta_{R ( x_{n + 1} , y )} ( d r ) d y \\
    & = \int_{\mathcal{R}} \pr{\thr \geq r} \#_{R} ( d r ; x_{n + 1} ),
\end{split}
\end{equation*}
where $(i)$ follows from the test and the calibration non-conformity scores being independent of each other (since the test and the calibration data are independent of each other).
The measure is $\#_{R} ( d r ; x_{n + 1} ) = \int_{\mathcal{Y}} \delta_{R ( x_{n + 1} , y )} ( d r ) d y$, where $\delta_{R ( x_{n + 1} , y )}$ is the Dirac delta distribution that places all of its probability mass on $R ( x_{n + 1} , y )$.
We define its Radon-Nikodym w.r.t.~the reference measure as $\#_{R} ( r ; x_{n + 1} ) = \int_{\mathcal{Y}} \delta_{R ( x_{n + 1} , y )} ( r ) d y$, when it exists; this is the feature-specific multiplicative factor (cf.~\cref{subsection:size-quantification-conditional}).
As a result, we obtain $\#_{R} ( d r ; x_{n + 1} ) = \#_{R} ( r ; x_{n + 1} ) d r$.

Continuing, we quantify the expected prediction set size conditioned on the test datum feature $X_{n + 1} = x_{n + 1}$ as,
\begin{equation*}
    \ev{\size{\ps} \middle\vert X_{n + 1} = x_{n + 1}} = \int_{\mathcal{R}} \pr{\thr \geq r} \#_{R} ( r ; x_{n + 1} ) d r,
\end{equation*}
which is the desired formulation in~\cref{equation:size-quantification-conditional-test-non-iid}.
Furthermore, following the proof in~\cref{appendix:proof-size-quantification}, we can simplify $\mathds{P} \{ \thr \geq r \} = P_{B ( n , \tildecdf{R}{r} )} ( n_{\alpha} )$, to get the desired conditional expected prediction set size in~\cref{equation:size-quantification-conditional-test-iid}.

\end{proof}

\subsection{Proof for Corollary 3}
\label{appendix:proof-size-confidence-interval}

\begin{proof}

For each calibration non-conformity score $R_{i}$, for $i = 1, \ldots, n$, we introduce a new random variable $V_{i} = - R_{i}$.
Since $R_{1 : n}$ are i.i.d.,~the random variables $V_{1 : n}$ are i.i.d.~as well, and we denote their cumulative distribution function as $P_{V}$.
We recognize that $\tilde{P}_{R}$ (cf.~\cref{equation:calibration-scores-tildecdf}) and $P_{V}$ are related in the following way,
\begin{equation*}
    \tildecdf{R}{r} = \pr{R_{1} < r} = \pr{- R_{1} > - r} = \pr{V_{1} > - r} = 1 - \pr{V_{1} \leq - r} = 1 - \cdf{V}{- r}.
\end{equation*}
Equivalently, the empirical approximation $\tilde{P}^{\text{emp}}_{R}$ (cf.~\cref{equation:calibration-score-tildecdf-estimate}) approximates $P_{V}$ in the following way,
\begin{equation*}
\begin{split}
    & \tildeempcdf{R}{r} = \frac{1}{k} \sum\nolimits_{i = 1}^{k} \ind{R^{\prime}_{i} < r} = \frac{1}{k} \sum\nolimits_{i = 1}^{k} \ind{- R^{\prime}_{i} > - r} \\
    & = \frac{1}{k} \sum\nolimits_{i = 1}^{k} \ind{V^{\prime}_{i} > - r} = 1 - \frac{1}{k} \sum\nolimits_{i = 1}^{k} \ind{V^{\prime}_{i} \leq - r} = 1 - \empcdf{V}{- r},
\end{split}
\end{equation*}
where we define $V^{\prime}_{i} = - R^{\prime}_{i}$, for $i = 1, \ldots, k$.

The Dvoretzky–Kiefer–Wolfowitz inequality~\citep{dvoretzky1956asymptotic,massart1990tight} bounds the difference between the cumulative distribution function and its empirical approximation, which can be transformed into confidence intervals.
We set $\Delta_{k , \gamma} = \sqrt{\ln ( 2 / \gamma ) / 2 k}$.
With probability at least $1 - \gamma$, for all $r \in \mathcal{R}$,
\begin{equation*}
\begin{split}
    & \cdf{V}{- r} \in \left[ \empcdf{V}{- r} - \Delta_{k , \gamma}, \empcdf{V}{- r} + \Delta_{k , \gamma} \right] \\
    & \iff 1 - \cdf{V}{- r} \in \left[ 1 - \empcdf{V}{- r} - \Delta_{k , \gamma}, 1 - \empcdf{V}{- r} + \Delta_{k , \gamma} \right] \\
    & \iff \tildecdf{R}{r} \in \left[ \tildeempcdf{R}{r} - \Delta_{k , \gamma}, \tildeempcdf{R}{r} + \Delta_{k , \gamma} \right].
\end{split}
\end{equation*}
This implies that with probability at least $1 - \gamma$, for all $r \in \mathcal{R}$,
\begin{equation*}
    \cdf{\bin{n}{\tildecdf{R}{r}}}{n_{\alpha}} \in \left\{ \cdf{\bin{n}{p}}{n_{\alpha}} \right\}_{p \in \left[ \tildeempcdf{R}{r} - \Delta_{k , \gamma}, \tildeempcdf{R}{r} + \Delta_{k , \gamma} \right]}.
\end{equation*}
Since $P_{B ( n , p )} ( n_{\alpha} )$ is a non-increasing function in $p$, with probability at least $1 - \gamma$, for all $r \in \mathcal{R}$,
\begin{equation*}
    \cdf{\bin{n}{\tildecdf{R}{r}}}{n_{\alpha}} \in \left[ \cdf{\bin{n}{\tildeempcdf{R}{r} + \Delta_{k , \gamma}}}{n_{\alpha}}, \cdf{\bin{n}{\tildeempcdf{R}{r} - \Delta_{k , \gamma}}}{n_{\alpha}} \right].
\end{equation*}
Since this holds for all $r \in \mathcal{R}$ simultaneously, with probability at least $1 - \gamma$,
\begin{equation*}
\begin{gathered}
    \epss \overset{(i)}{=} \int_{\mathcal{R}} \cdf{\bin{n}{\tildecdf{R}{r}}}{n_{\alpha}} \mf d r \in \\
    \left[ \int_{\mathcal{R}} \cdf{\bin{n}{\tildeempcdf{R}{r} + \Delta_{k , \gamma}}}{n_{\alpha}} \mf d r, \int_{\mathcal{R}} \cdf{\bin{n}{\tildeempcdf{R}{r} - \Delta_{k , \gamma}}}{n_{\alpha}} \mf d r \right],
\end{gathered}
\end{equation*}
where $(i)$ follows from \cref{equation:size-quantification-iid} and the multiplicative factor $\mf$ is known.
This is the desired confidence interval for the expected size of the prediction sets in~\cref{equation:size-confidence-interval} of~\cref{corollary:size-confidence-interval}.

\end{proof}

\subsection{Corollary 4}
\label{appendix:proof-size-quantification-conditional-calibration}

In addition to~\cref{corollary:size-quantification-conditional-test} where we quantify the expected prediction set size conditioned on a test datum feature, we can condition on the calibration data $Z_{1 : n} = z_{1 : n}$ instead.
We summarize the corresponding result below.

\begin{corollary}[Expected size of prediction sets conditioned on the calibration data]

If the test and the calibration non-conformity scores are independent of each other, then the expected size of the split conformal prediction sets conditioned on the calibration data $Z_{1 : n} = z_{1 : n}$ (and $r_{i} = R( z_{i} )$, for $i = 1 , \ldots , n$) is given by the following,
\begin{equation}
    \ev{\size{\ps} \middle\vert Z_{1 : n} = z_{1 : n}} = \int_{\mathcal{R}} \ind{\tau_{\alpha} \left( r_{1 : n} \right) \geq r} \mf d r.
\label{equation:size-quantification-conditional-calibration}
\end{equation}

\label{corollary:size-quantification-conditional-calibration}
\end{corollary}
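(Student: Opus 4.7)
The plan is to mirror the proofs of \cref{theorem:size-quantification} and \cref{corollary:size-quantification-conditional-test}, with the twist that conditioning on the calibration data turns the threshold into a deterministic quantity. Once we condition on $Z_{1 : n} = z_{1 : n}$, the calibration scores $R_{1 : n}$ equal the fixed values $r_{1 : n}$, so the acceptance threshold $\thr$ becomes the constant $\tau_{\alpha} ( r_{1 : n} )$; the only remaining source of randomness in the prediction set is the test datum feature $X_{n + 1}$.

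First, I would rewrite the conditional expected set size by expressing the cardinality as an integral of indicators and swapping expectation and integral via Fubini, exactly as in \cref{appendix:proof-size-quantification}:
\begin{equation*}
\begin{split}
    & \ev{\size{\ps} \middle\vert Z_{1 : n} = z_{1 : n}} \\
    & \quad = \int_{\mathcal{Y}} \pr{\tau_{\alpha} ( r_{1 : n} ) \geq R ( X_{n + 1} , y ) \middle\vert Z_{1 : n} = z_{1 : n}} d y.
\end{split}
\end{equation*}
Because the test and calibration non-conformity scores are independent, conditioning on $Z_{1 : n}$ does not alter the law of $R ( X_{n + 1} , y )$, so the conditional probability coincides with the unconditional one. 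Moreover $\tau_{\alpha} ( r_{1 : n} )$ is now deterministic, so the probability is simply $\pr{R ( X_{n + 1} , y ) \leq \tau_{\alpha} ( r_{1 : n} )}$.

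Next, I would introduce the non-conformity score $r$ as an auxiliary variable, integrate against the pushforward $P_{R ( X_{n + 1} , y )} ( d r )$, and swap the order of integration between $\mathcal{Y}$ and $\mathcal{R}$:
\begin{equation*}
\begin{split}
    & \int_{\mathcal{Y}} \int_{\mathcal{R}} \ind{\tau_{\alpha} ( r_{1 : n} ) \geq r} \cdf{R ( X_{n + 1} , y )}{d r} d y \\
    & \quad = \int_{\mathcal{R}} \ind{\tau_{\alpha} ( r_{1 : n} ) \geq r} \#_{R} ( d r ),
\end{split}
\end{equation*}
with $\#_{R} ( d r ) = \int_{\mathcal{Y}} P_{R ( X_{n + 1} , y )} ( d r ) d y$ as defined in the proof of \cref{theorem:size-quantification}. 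Assuming $\#_{R}$ is absolutely continuous with respect to the reference measure, $\#_{R} ( d r ) = \mf d r$, and we recover \cref{equation:size-quantification-conditional-calibration}.

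There is no real obstacle: the steps are essentially those of the marginal proof, with the sole simplification that the probability $\pr{\thr \geq r}$ collapses to the indicator $\ind{\tau_{\alpha} ( r_{1 : n} ) \geq r}$ under the conditioning. The only care needed is to invoke the independence of test and calibration non-conformity scores (so that conditioning does not perturb the law of $R ( X_{n + 1} , y )$) and to justify Fubini for the measure-theoretic swap, both of which are handled identically to \cref{appendix:proof-size-quantification}.
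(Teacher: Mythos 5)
Your proposal is correct and follows essentially the same route as the paper's own proof: expand the set size as an integral of indicators, swap expectation and integration, use independence of the test and calibration scores to replace the conditional probability with the unconditional one evaluated at the now-deterministic threshold $\tau_{\alpha} ( r_{1 : n} )$, and then disintegrate over the pushforward measure $\#_{R} ( d r ) = \mf d r$ exactly as in the proof of \cref{theorem:size-quantification}. The only difference is cosmetic: the paper compresses the final disintegration into a single step citing \cref{appendix:proof-size-quantification}, whereas you spell it out.
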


\begin{proof}

The expected size of split conformal prediction sets conditioned on the calibration data $Z_{1 : n} = z_{1 : n}$ is,
\begin{equation*}
\begin{split}
    \ev{\size{\ps} \middle\vert Z_{1 : n} = z_{1 : n}} & = \ev{\size{\left\{ y \in \mathcal{Y} : \thr \geq R ( X_{n + 1} , y ) \right\}} \middle\vert Z_{1 : n} = z_{1 : n}} \\
    & = \ev{\int_{\mathcal{Y}} \ind{\thr \geq R ( X_{n + 1} , y )} d y \middle\vert Z_{1 : n} = z_{1 : n}} \\
    & = \int_{\mathcal{Y}} \pr{\thr \geq R ( X_{n + 1} , y ) \middle\vert Z_{1 : n} = z_{1 : n}} d y \\
    & \overset{(i)}{=} \int_{\mathcal{Y}} \pr{\tau_{\alpha} \left( r_{1 : n} \right) \geq R ( X_{n + 1} , y )} d y \\
    & \overset{(ii)}{=} \int_{\mathcal{R}} \ind{\tau_{\alpha} \left( r_{1 : n} \right) \geq r} \mf d r,
\end{split}
\end{equation*}
where $(i)$ follows from the test and the calibration non-conformity scores being independent of each other (since the test and the calibration data are independent of each other), and $(ii)$ follows from the proof in~\cref{appendix:proof-size-quantification}.

\end{proof}

\subsection{Proof for Practical Estimation (Unknown Multiplicative Factor)}
\label{appendix:proof-size-estimation-unknown-multiplicative-factor}

Under the setting where the multiplicative factor is intractable due to its dependence on the data distribution and/or the machine learning model, we can re-arrange the formulation of the expected prediction set size in~\cref{equation:size-quantification-iid} to get rid of this factor.
Using~\cref{equation:size-quantification-iid,equation:multiplicative-factor}, we obtain the following,
\begin{equation*}
\begin{split}
    & \epss = \int_{\mathcal{R}} \cdf{\bin{n}{\tildecdf{R}{r}}}{n_{\alpha}} \mf d r = \int_{\mathcal{R}} \cdf{\bin{n}{\tildecdf{R}{r}}}{n_{\alpha}} \int_{\mathcal{Y}} \pdf{R ( X_{n + 1} , y )}{r} d y \, d r \\
    & = \int_{\mathcal{Y}} \int_{\mathcal{R}} \cdf{\bin{n}{\tildecdf{R}{r}}}{n_{\alpha}} \pdf{R ( X_{n + 1} , y )}{r} d r \, d y = \int_{\mathcal{Y}} \ev{\cdf{\bin{n}{\tildecdf{R}{R ( X_{n + 1} , y )}}}{n_{\alpha}}} d y,
\end{split}
\end{equation*}
where the expectation term is evaluated over the random variable $R ( X_{n + 1} , y )$.
This derives~\cref{equation:size-quantification-iid-unknown-multiplicative-factor}.

\section{MULTIPLICATIVE FACTOR}
\label{appendix:multiplicative-factor}

The multiplicative factor (cf.~\cref{equation:multiplicative-factor}) is responsible for translating the reference measure on the space of non-conformity scores to the reference measure on the label space.
Here we provide the derivation of this factor under different settings used in~\cref{section:experiments} for the experiments.
Note that we overload $\int_{\mathcal{Y}} d y$ to be the Lebesgue measure when $\mathcal{Y}$ is continuous and the counting measure when it is discrete (amounting to a sum over $\mathcal{Y}$).

\subsection{\textit{l}\textit{\scriptsize p} Loss for Regression}
\label{appendix:multiplicative-factor-lp-regression}

We begin with regression problems, where the label space is the set of reals, i.e.,~$\mathcal{Y} = \mathds{R}$.
A common non-conformity function for such problems involves a machine learning model $M$ and the $l_{1}$ loss~\citep{papadopoulos2002inductive,vovk2005algorithmic,tibshirani2019conformal,barber2023conformal}.
We generalize this to any $l_{p}$ loss, with $p \geq 1$.
Then, the non-conformity function is given by $R ( x , y ) = \lvert M ( x ) - y \rvert^{p}$ and the space of non-conformity scores is $\mathcal{R} = [ 0 , \infty )$.

We can define the cumulative distribution function of $R ( X_{n + 1} , y )$ as follows,
\begin{equation*}
\begin{split}
    & \cdf{R ( X_{n + 1} , y )}{r} = \pr{R ( X_{n + 1} , y ) \leq r} = \pr{\size{M ( X_{n + 1} ) - y}^{p} \leq r} = \pr{\size{M ( X_{n + 1} ) - y} \leq r^{1 / p}} \\
    & = \pr{y - r^{1 / p} \leq M ( X_{n + 1} ) \leq y + r^{1 / p}} = \pr{M ( X_{n + 1} ) \leq y + r^{1 / p}} - \pr{M ( X_{n + 1} ) < y - r^{1 / p}} \\
    & \overset{(i)}{=} \pr{M ( X_{n + 1} ) \leq y + r^{1 / p}} - \pr{M ( X_{n + 1} ) \leq y - r^{1 / p}} = \cdf{M ( X_{n + 1} )}{y + r^{1 / p}} - \cdf{M ( X_{n + 1} )}{y - r^{1 / p}},
\end{split}
\end{equation*}
where $(i)$ follows from the prediction $M ( X_{n + 1} )$ being continuous.
Further, differentiating the above with respect to $r$, we get the probability density function of $R ( X_{n + 1} , y )$ as follows,
\begin{equation*}
    \pdf{R ( X_{n + 1} , y )}{r} = \frac{r^{1 / p - 1}}{p} \pdf{M ( X_{n + 1} )}{y + r^{1 / p}} + \frac{r^{1 / p - 1}}{p} \pdf{M ( X_{n + 1} )}{y - r^{1 / p}}.
\end{equation*}
Therefore, the multiplicative factor in this setting is given by,
\begin{equation*}
\begin{split}
    & \mf = \int_{\mathds{R}} \pdf{R ( X_{n + 1} , y )}{r} d y = \int_{\mathds{R}} \frac{r^{1 / p - 1}}{p} \pdf{M ( X_{n + 1} )}{y + r^{1 / p}} d y + \int_{\mathds{R}} \frac{r^{1 / p - 1}}{p} \pdf{M ( X_{n + 1} )}{y - r^{1 / p}} d y \\
    & = \frac{r^{1 / p - 1}}{p} \left( \int_{\mathds{R}} \pdf{M ( X_{n + 1} )}{y + r^{1 / p}} d y + \int_{\mathds{R}} \pdf{M ( X_{n + 1} )}{y - r^{1 / p}} d y \right) \\
    & \overset{(ii)}{=} \frac{r^{1 / p - 1}}{p} \left( \int_{\mathds{R}} \pdf{M ( X_{n + 1} )}{u} d u + \int_{\mathds{R}} \pdf{M(X_{n + 1})}{v} d v \right) = \frac{r^{1 / p - 1}}{p} ( 1 + 1 ) = \frac{2 r^{1 / p - 1}}{p},
\end{split}
\end{equation*}
where $(ii)$ follows from a change of variables with $u = y + r^{1 / p}$ and $v = y - r^{1 / p}$.

\paragraph{\textit{l}\textit{\scriptsize p} Loss for High-Dimensional Regression}

We do not restrict ourselves to one-dimensional regression; we can further generalize the above to $m$-dimensional regression problems, where the label space is $\mathcal{Y} = \mathds{R}^{m}$, with $m \geq 1$.
The non-conformity function involves a machine learning model $M$ and the $l_{p}$ loss, with $p \geq 1$.
Then, the non-conformity function is given by $R ( x , y ) = \lVert M ( x ) - y \rVert^{p}_{p}$ and the space of non-conformity scores is $\mathcal{R} = [ 0 , \infty )$.

The multiplicative factor in this setting is given by,
\begin{equation*}
\begin{split}
    & \mf = \int_{\mathds{R}^{m}} \pdf{R ( X_{n + 1} , y )}{r} d y = \int_{\mathds{R}^{m}} \frac{d}{d r} \cdf{R ( X_{n + 1} , y )}{r} d y = \frac{d}{d r} \int_{\mathds{R}^{m}} \cdf{R ( X_{n + 1} , y )}{r} d y \\
    & = \frac{d}{d r} \int_{\mathds{R}^{m}} \pr{R ( X_{n + 1} , y ) \leq r} d y = \frac{d}{d r} \int_{\mathds{R}^{m}} \pr{\left\lVert M ( X_{n + 1} ) - y \right\rVert^{p}_{p} \leq r} d y \\
    & = \frac{d}{d r} \int_{\mathds{R}^{m}} \pr{\left\lVert M ( X_{n + 1} ) - y \right\rVert_{p} \leq r^{1 / p}} d y.
\end{split}
\end{equation*}
We denote $B^{p}_{m} ( c , r )$ to be a $m$-dimensional $l_{p}$-ball with center $c$ and radius $r$, and $V^{p}_{m} ( r )$ to be the volume of a $m$-dimensional $l_{p}$-ball with radius $r$.
Continuing from above, we have that,
\begin{equation*}
\begin{split}
    & \mf = \frac{d}{d r} \int_{\mathds{R}^{m}} \pr{\left\lVert M ( X_{n + 1} ) - y \right\rVert_{p} \leq r^{1 / p}} d y = \frac{d}{d r} \int_{\mathds{R}^{m}} \pr{y \in B^{p}_{m} \left( M ( X_{n + 1} ) , r^{1 / p} \right)} d y \\
    & = \frac{d}{d r} \int_{\mathds{R}^{m}} \int_{\mathds{R}^{m}} \pr{y \in B^{p}_{m} \left( M ( X_{n + 1} ) , r^{1 / p} \right) \middle\vert M ( X_{n + 1} ) = c} \pdf{M ( X_{n + 1} )}{c} d c \, d y \\
    & = \frac{d}{d r} \int_{\mathds{R}^{m}} \int_{\mathds{R}^{m}} \ind{y \in B^{p}_{m} \left( c , r^{1 / p} \right)} \pdf{M ( X_{n + 1} )}{c} d c \, d y = \frac{d}{d r} \int_{\mathds{R}^{m}} \pdf{M ( X_{n + 1} )}{c} \int_{\mathds{R}^{m}} \ind{y \in B^{p}_{m} \left( c , r^{1 / p} \right)} d y \, d c \\
    & = \frac{d}{d r} \int_{\mathds{R}^{m}} \pdf{M ( X_{n + 1} )}{c} V^{p}_{m} \left( r^{1 / p} \right) d c = \frac{d}{d r} V^{p}_{m} \left( r^{1 / p} \right) \int_{\mathds{R}^{m}} \pdf{M ( X_{n + 1} )}{c} d c = \frac{d}{d r} V^{p}_{m} \left( r^{1 / p} \right).
\end{split}
\end{equation*}
Furthermore, the volume of a $m$-dimensional $l_{p}$-ball with radius $r$ is $V^{p}_{m} ( r ) = ( 2 \Gamma ( 1 / p + 1 ) )^{m} r^{m} / \Gamma ( m / p + 1 )$.
Therefore, the multiplicative factor in this setting is given by,
\begin{equation*}
    \mf = \frac{d}{d r} V^{p}_{m} \left( r^{1 / p} \right) = \frac{d}{d r} \frac{\left( 2 \Gamma ( 1 / p + 1 ) \right)^{m}}{\Gamma ( m / p + 1 )} r^{m / p} = \frac{\left( 2 \Gamma ( 1 / p + 1 ) \right)^{m}}{\Gamma ( m / p + 1 )} \frac{m}{p} r^{m / p - 1}.
\end{equation*}
Note that this is a generalization of our previous result for 1-dimensional regression ($\mathcal{Y} = \mathds{R}$) to higher dimensions ($\mathcal{Y} = \mathds{R}^{m}$).
Indeed, by substituting $m = 1$, we recover the multiplicative factor $\mf = 2 r^{1 / p - 1} / p$ as before.

\subsection{0-1 Loss for Classification}
\label{appendix:multiplicative-factor-zeroone-classification}

Here we consider classification problems, where the label space is discrete.
The machine learning model $M$ predicts a label directly, i.e.,~$M ( x ) \in \mathcal{Y}$ for an input feature $x \in \mathcal{X}$.
We consider the 0-1 loss which takes the value 0 if the prediction is correct and 1 when incorrect.
Then, the non-conformity function is given by $R ( x , y ) = \mathds{1} \{ M ( x ) \neq y \}$ and the space of non-conformity scores is $\mathcal{R} = \{ 0 , 1 \}$.
The multiplicative factor in this setting is given by,
\begin{equation*}
\begin{split}
    & \mf = \sum_{y \in \mathcal{Y}} \pdf{R ( X_{n + 1} , y )}{r} = \sum_{y \in \mathcal{Y}} \pr{R ( X_{n + 1} , y ) = r} = \sum_{y \in \mathcal{Y}} \pr{\ind{M ( x ) \neq y} = r} \\
    & = \begin{cases} \sum_{y \in \mathcal{Y}} \pr{M ( x ) = y} , & r = 0 \\ \sum_{y \in \mathcal{Y}} \pr{M ( x ) \neq y} , & r = 1 \end{cases} = \begin{cases} \sum_{y \in \mathcal{Y}} \pr{M ( x ) = y} , & r = 0 \\ \sum_{y \in \mathcal{Y}} \left( 1 - \pr{M ( x ) = y} \right) , & r = 1 \end{cases} = \begin{cases} 1 , & r = 0 \\ \size{\mathcal{Y}} - 1 , & r = 1 \end{cases}.
\end{split}
\end{equation*}

\subsection{Other Settings}
\label{appendix:multiplicative-factor-other}

There are many other non-conformity functions proposed for regression and classification problems.
However, in some cases, the multiplicative factor can depend on the distribution of data and the machine learning model used.

\paragraph{Least Ambiguous Set-Valued Classifiers (LAC)}

\citet{sadinle2019least} propose LAC that provably construct prediction sets with minimum expected size if the predicted probabilities are correct; this does not hold in practice, but the predicted sets are small.
In this case, the machine learning model $M$ predicts a probability distribution over the labels; we denote $M_{y} (x)$ as the predicted probability for label $y \in \mathcal{Y}$ for an input feature $x \in \mathcal{X}$.
The non-conformity function is given by $R ( x , y ) = 1 - M_{y} ( x )$ and the space of non-conformity scores is $\mathcal{R} = [ 0 , 1 ]$.

We can define the cumulative distribution function of $R ( X_{n + 1} , y )$ as follows,
\begin{equation*}
\begin{split}
    & \cdf{R ( X_{n + 1} , y )}{r} = \pr{R ( X_{n + 1} , y ) \leq r} = \pr{1 - M_{y} ( X_{n + 1} ) \leq r} = \pr{M_{y} ( X_{n + 1} ) \geq 1 - r} \\
    & = 1 - \pr{M_{y} ( X_{n + 1} ) < 1 - r} \overset{(i)}{=} 1 - \pr{M_{y} ( X_{n + 1} ) \leq 1 - r} = 1 - \cdf{M_{y} ( X_{n + 1} )}{1 - r},
\end{split}
\end{equation*}
where $(i)$ follows from the prediction $M_{y} ( X_{n + 1} )$ being continuous.
Further, differentiating the above with respect to $r$, we get the probability density function of $R ( X_{n + 1} , y )$ as follows,
\begin{equation*}
    \pdf{R ( X_{n + 1} , y )}{r} = \pdf{M_{y} ( X_{n + 1} )}{1 - r}.
\end{equation*}
Therefore, the multiplicative factor in this setting is given by,
\begin{equation*}
    \mf = \sum_{y \in \mathcal{Y}} \pdf{R ( X_{n + 1} , y )}{r} = \sum_{y \in \mathcal{Y}} \pdf{M_{y} ( X_{n + 1} )}{1 - r},
\end{equation*}
which is dependent on the data distribution and the machine learning model.
Consequently, the multiplicative factor cannot be analytically solved under this setting without making any further assumptions.

\paragraph{Conformalized Quantile Regression (CQR)}

\citet{romano2019conformalized} propose a non-conformity function for regression ($\mathcal{Y} = \mathds{R}$).
In this case, the machine learning model is trained using quantile regression~\citep{koenker1978regression} to have two outputs $M_{\alpha / 2} ( x ) , M_{1 - \alpha / 2} ( x ) \in \mathds{R}$, corresponding to predictions of the $(\alpha / 2)$'th and $(1 - \alpha / 2)$'th level quantiles respectively, conditioned on an input feature $x \in \mathcal{X}$.
Further, the proposed non-conformity function is a loss on the predicted quantile interval, given by $R ( x , y ) = \max \{ M_{\alpha / 2} ( x ) - y , y - M_{1 - \alpha / 2} ( x ) \}$.
For ease of notation when deriving the associated multiplicative factor, we set $M ( x ) = ( M_{1 - \alpha / 2} ( x ) + M_{\alpha / 2} ( x ) ) / 2 \in \mathds{R}$ and $M_{\Delta} ( x ) = ( M_{1 - \alpha / 2} ( x ) - M_{\alpha / 2} ( x ) ) / 2 \in \mathds{R}_{\geq 0}$.
Then, the non-conformity function can be rewritten as $R ( x , y ) = \max \{ M ( x ) - M_{\Delta} ( x ) - y , y - M ( x ) - M_{\Delta} ( x ) \}$ and the space of non-conformity scores is $\mathcal{R} = \mathds{R}$.

The multiplicative factor in this setting is given by,
\begin{equation*}
\begin{split}
    & \mf \!=\! \int_{\mathds{R}} \pdf{R ( X_{n + 1} , y )}{r} d y \!=\! \int_{\mathds{R}} \frac{d}{d r} \cdf{R ( X_{n + 1} , y )}{r} d y \!=\! \frac{d}{d r} \int_{\mathds{R}} \cdf{R ( X_{n + 1} , y )}{r} d y \!=\! \frac{d}{d r} \int_{\mathds{R}} \pr{R ( X_{n + 1} , y ) \leq r} d y \\
    & \!=\! \frac{d}{d r} \int_{\mathds{R}} \pr{\max \{ M ( X_{n + 1} ) - M_{\Delta} ( X_{n + 1} ) - y , y -  M ( X_{n + 1} ) - M_{\Delta} ( X_{n + 1} ) \} \leq r} d y \\
    & \!=\! \frac{d}{d r} \int_{\mathds{R}} \pr{y \in \left[  M ( X_{n + 1} ) - M_{\Delta} ( X_{n + 1} ) - r ,  M ( X_{n + 1} ) + M_{\Delta} ( X_{n + 1} ) + r \right]} d y \\
    & \!=\! \frac{d}{d r} \int_{\mathds{R}} \int_{\mathds{R}_{\geq 0}} \int_{\mathds{R}} \pr{y \!\in\! \left[  M ( X_{n + 1} ) - M_{\Delta} ( X_{n + 1} ) - r ,  M ( X_{n + 1} ) + M_{\Delta} ( X_{n + 1} ) + r \right] \middle\vert M ( X_{n + 1} ) \!=\! c , M_{\Delta} ( X_{n + 1} ) \!=\! \delta} \\
    & \quad \quad \quad \quad \quad \quad \quad \ \pdf{M ( X_{n + 1} ) , M_{\Delta} ( X_{n + 1} )}{c , \delta} d c \, d \delta \, d y \\
    & \!=\! \frac{d}{d r} \int_{\mathds{R}} \int_{\mathds{R}_{\geq 0}} \int_{\mathds{R}} \ind{y \in \left[  c - \delta - r ,  c + \delta + r \right]} \pdf{M ( X_{n + 1} ) , M_{\Delta} ( X_{n + 1} )}{c , \delta} d c \, d \delta \, d y \\
    & \!=\! \frac{d}{d r} \int_{\mathds{R}_{\geq 0}} \int_{\mathds{R}} \pdf{M ( X_{n + 1} ) , M_{\Delta} ( X_{n + 1} )}{c , \delta} \int_{\mathds{R}} \ind{y \in \left[  c - \delta - r ,  c + \delta + r \right]} d y \, d c \, d \delta \\
    & \!=\! \frac{d}{d r} \int_{\mathds{R}_{\geq 0}} \int_{\mathds{R}} \pdf{M ( X_{n + 1} ) , M_{\Delta} ( X_{n + 1} )}{c , \delta} 2 ( \delta + r ) \ind{\delta + r \geq 0} d c \, d \delta \\
    & \!=\! \frac{d}{d r} \int_{\mathds{R}_{\geq 0}} 2 ( \delta + r ) \ind{\delta + r \geq 0} \int_{\mathds{R}} \pdf{M ( X_{n + 1} ) , M_{\Delta} ( X_{n + 1} )}{c , \delta} d c \, d \delta \\
    & \!=\! \frac{d}{d r} \int_{\mathds{R}_{\geq 0}} 2 ( \delta + r ) \ind{\delta + r \geq 0} \pdf{M_{\Delta} ( X_{n + 1} )}{\delta} d \delta \!=\! \frac{d}{d r} \int_{\max \{ 0 , - r \}}^{\infty} 2 ( \delta + r ) \pdf{M_{\Delta} ( X_{n + 1} )}{\delta} d \delta \\
    & \!=\! 2 \int_{\max \{ 0 , - r \}}^{\infty} \frac{d}{d r} ( \delta + r ) \pdf{M_{\Delta} ( X_{n + 1} )}{\delta} d \delta \!=\! 2 \int_{\max \{ 0 , - r \}}^{\infty} \pdf{M_{\Delta} ( X_{n + 1} )}{\delta} d \delta \!=\! 2 \pr{M_{\Delta} ( X_{n + 1} ) \geq \max \{ 0 , - r \}} \\
    & \!=\! 2 \left( 1 - \pr{M_{\Delta} ( X_{n + 1} ) < \max \{ 0 , - r \}} \right) \!\overset{(i)}{=}\! 2 \left( 1 - \pr{M_{\Delta} ( X_{n + 1} ) \leq \max \{ 0 , - r \}} \right) \\
    & \!=\! 2 \left( 1 - \cdf{M_{\Delta} ( X_{n + 1} )}{\max \{ 0 , - r \}} \right) \!=\! \begin{cases} 2 \left( 1 - \cdf{M_{\Delta} ( X_{n + 1} )}{0} \right) , & r \geq 0 \\ 2 \left( 1 - \cdf{M_{\Delta} ( X_{n + 1} )}{- r} \right) , & r < 0 \end{cases} \!=\! \begin{cases} 2 , & r \geq 0 \\ 2 \left( 1 - \cdf{M_{\Delta} ( X_{n + 1} )}{- r} \right) , & r < 0 \end{cases},
\end{split}
\end{equation*}
where $(i)$ follows from $M_{\Delta} ( X_{n + 1} )$ being continuous.
The multiplicative factor is again dependent on the data distribution and the machine learning model, and is therefore intractable without making further assumptions.

\paragraph{Adaptive Prediction Sets (APS) and Regularized Adaptive Prediction Sets (RAPS)}

\citet{romano2020classification,angelopoulos2021uncertainty} propose non-conformity functions for classification.
\citet{romano2020classification} propose adaptive prediction sets (APS) that sum the predicted label probabilities in descending order until the label assigned to the data point is included; the corresponding non-conformity function is given by $R ( x , y ) = U M_{y} ( x ) + \sum_{y^{\prime} \in \mathcal{Y}} \ind{M_{y^{\prime}} ( x ) > M_{y} ( x )} M_{y^{\prime}} ( x )$, where $M_{y} (x)$ is the predicted probability for label $y \in \mathcal{Y}$ for an input feature $x \in \mathcal{X}$ and $U \sim \mathcal{U} ( 0 , 1 )$ is a uniform random variable over $[ 0 , 1 ]$.
Additionally, \citet{angelopoulos2021uncertainty} propose regularized adaptive prediction sets (RAPS) that further add a regularization term to penalize the number of labels included in the prediction set.
Both these non-conformity functions construct small prediction sets, but their associated multiplicative factors are intractable without making further assumptions.

\section{EXPERIMENTS ON UCI DATASETS}
\label{appendix:experiments-uci}

\begin{table}[t]
    \caption{
        \textbf{Dataset statistics summaries.}
        We summarize the statistics of the UCI datasets used in our experiments.
        This includes the number of data points and features for each dataset.
        For regression datasets, we also include the range of label values; for classification, we also include the number of classes/labels.
    }
    \label{table:uci-dataset}
    \vspace{-10pt}
    \begin{center}
    \begin{scriptsize}
    \begin{tabular}{lC{1.6cm}C{1.5cm}C{1.6cm}lC{1.6cm}C{1.5cm}C{1.2cm}}
        \toprule
        \multicolumn{4}{c}{\textbf{Regression ($\pmb{\mathcal{Y} = \mathds{R}}$)}} & \multicolumn{4}{c}{\textbf{Classification (discrete $\pmb{\mathcal{Y}}$)}} \\
        \cmidrule(lr){1-4} \cmidrule(lr){5-8}
        \textbf{Dataset} & \textbf{Number of data points} & \textbf{Number of features} & \textbf{Range of labels} & \textbf{Dataset} & \textbf{Number of data points} & \textbf{Number of features} & \textbf{Number of labels} \\
        \midrule
        Abalone & 4177 & 10 & [-2.79, 5.94] & APSFailure & 120000 & 341 & 2 \\
        AirFoil & 1503 & 5 & [-3.11, 2.34] & Adult & 48842 & 108 & 2 \\
        AirQuality & 8991 & 14 & [-1.35, 7.20] & Avila & 20867 & 10 & 12 \\
        BlogFeedback & 60021 & 280 & [-0.72, 3.20] & BankMarketing & 41188 & 63 & 2 \\
        CTSlices & 53500 & 384 & [-2.00, 2.25] & CardDefault & 30000 & 23 & 2 \\
        FacebookComments & 209074 & 53 & [-0.21, 70.12] & Landsat & 6435 & 36 & 6 \\
        OnlineNews & 39644 & 58 & [-8.03, 6.63] & LetterRecognition & 20000 & 16 & 26 \\
        PowerPlant & 9568 & 4 & [-2.00, 2.43] & MagicGamma & 19020 & 10 & 2 \\
        Superconductivity & 21263 & 81 & [-1.00, 4.39] & SensorLessDrive & 58509 & 48 & 11 \\
        WhiteWineQuality & 4898 & 11 & [-3.32, 3.57] & Shuttle & 58000 & 9 & 7 \\
        \bottomrule
    \end{tabular}
    \end{scriptsize}
    \end{center}
    \vspace{-10pt}
\end{table}

We illustrate the efficacy of our results experimentally by applying our estimation procedures derived in~\cref{section:size-estimation} on real-world datasets from the UCI database~\citep{kelly2023uci}.\footnote{We use the python package~\href{https://github.com/isacarnekvist/ucimlr}{https://github.com/isacarnekvist/ucimlr} to access the datasets.}
We summarize the dataset statistics in~\cref{table:uci-dataset}.

\subsection{Prediction Errors}
\label{appendix:experiments-uci-error}

\begin{table}[t]
    \caption{
        \textbf{Prediction error frequencies.}
        We report the empirically achieved prediction error frequencies for the split conformal prediction framework using different non-conformity functions and UCI datasets (with $\alpha = 0. 1$).
    }
    \label{table:uci-error}
    \vspace{-10pt}
    \begin{center}
    \begin{scriptsize}
    \begin{tabular}{lC{1.5cm}C{1.5cm}clC{1.5cm}C{1.5cm}C{1.5cm}}
        \toprule
        \multicolumn{3}{c}{\textbf{Regression ($\pmb{\mathcal{Y} = \mathds{R}}$)}} & & \multicolumn{4}{c}{\textbf{Classification (discrete $\pmb{\mathcal{Y}}$)}} \\
        \cmidrule(lr){1-3} \cmidrule(lr){5-8}
        & \multicolumn{2}{c}{\textbf{Prediction error frequency}} & & & \multicolumn{3}{c}{\textbf{Prediction error frequency}} \\
        \cmidrule(lr){2-3} \cmidrule(lr){6-8}
        \textbf{Dataset} & \textbf{$\pmb{l_{1}}$} & \textbf{CQR} & & \textbf{Dataset} & \textbf{0-1} & \textbf{LAC} & \textbf{APS} \\
        \midrule
        Abalone & 0.0987 & 0.0846 & & APSFailure (2) & 0.0055 & 0.0742 & 0.0999 \\
        AirFoil & 0.0987 & 0.0983 & & Adult (2) & 0.0000 & 0.0982 & 0.1001 \\
        AirQuality & 0.0987 & 0.0201 & & Avila (12) & 0.0489 & 0.0973 & 0.0998 \\
        BlogFeedback & 0.1001 & 0.0514 & & BankMarketing (2) & 0.0904 & 0.0982 & 0.0999 \\
        CTSlices & 0.1000 & 0.0999 & & CardDefault (2) & 0.0000 & 0.0984 & 0.0999 \\
        FacebookComments & 0.1000 & 0.0449 & & Landsat (6) & 0.0255 & 0.0982 & 0.1001 \\
        OnlineNews & 0.1000 & 0.0999 & & LetterRecognition (26) & 0.0781 & 0.0972 & 0.0999 \\
        PowerPlant & 0.0997 & 0.0998 & & MagicGamma (2) & 0.0000 & 0.0983 & 0.0999 \\
        Superconductivity & 0.0999 & 0.0996 & & SensorLessDrive (11) & 0.0036 & 0.0938 & 0.0999 \\
        WhiteWineQuality & 0.0977 & 0.0387 & & Shuttle (7) & 0.0007 & 0.0144 & 0.0999 \\
        \bottomrule
    \end{tabular}
    \end{scriptsize}
    \end{center}
    \vspace{-10pt}
\end{table}

We include the empirically achieved prediction error frequencies for our implementation of the split conformal prediction framework using different non-conformity functions.
This facilitates the evaluation of our implementation in satisfying the requirement in~\cref{equation:conformal-error}.
We use the same setup as the one highlighted in~\cref{section:experiments-setup}.

With the significance level $\alpha$ set to 0.1, the results are illustrated in~\cref{table:uci-error}.
We observe that the error frequencies are either close to or less than the desired bound of $\alpha = 0.1$ for every non-conformity function and dataset.

\subsection{Interval Estimate Errors}
\label{appendix:experiments-uci-interval-error}

Our estimation procedures in~\cref{section:size-estimation} provide point and interval estimates for the expected prediction set size.
The latter are high-probability bounds, which in fact are valid confidence intervals when the multiplicative factor is known (cf.~\cref{corollary:size-confidence-interval}).
We want to validate our results experimentally; however, it is impossible to do so as the true expected set size is unknown in practice.
As a proxy, we use the mean Monte Carlo average instead and test its inclusion in our individual interval bounds.
With that, we expand on our experimental results in~\cref{section:experiments-marginal}.
Note that our interval estimates are obtained from~\cref{subsection:size-estimation-known-multiplicative-factor} (with valid confidence intervals) for the $l_{1}$ and 0-1 loss non-conformity functions, and from~\cref{subsection:size-estimation-unknown-multiplicative-factor} for the other non-conformity functions.

\cref{table:uci-interval-error} illustrates the frequency of error of our individual estimated intervals (with $\gamma$ set to 0.1) bounding the mean Monte Carlo average; we would expect these values to be below $\gamma = 0.1$.
When the intervals are valid confidence intervals (under the $l_{1}$ and 0-1 loss non-conformity functions), the error frequencies are always below 0.1, corroborating our result in~\cref{corollary:size-confidence-interval}.
When the intervals are not necessarily valid confidence intervals, they still achieve errors lower than 0.1 on 23/30 instances.
In the 7 remaining instances, our point and interval estimates are close to the mean Monte Carlo average, but the standard deviation in the Monte Carlo average estimate itself is high.
Note that this is a proxy to the true interval error, which cannot be computed in practice.

\begin{table}[t]
    \caption{
        \textbf{Interval error frequencies.}
        We report the error frequencies of our individual interval estimates (with $\gamma = 0.1$) bounding the mean Monte Carlo average for different non-conformity functions and UCI datasets.
    }
    \label{table:uci-interval-error}
    \vspace{-10pt}
    \begin{center}
    \begin{scriptsize}
    \begin{tabular}{lC{1.5cm}C{1.5cm}clC{1.5cm}C{1.5cm}C{1.5cm}}
        \toprule
        \multicolumn{3}{c}{\textbf{Regression ($\pmb{\mathcal{Y} = \mathds{R}}$)}} & & \multicolumn{4}{c}{\textbf{Classification (discrete $\pmb{\mathcal{Y}}$)}} \\
        \cmidrule(lr){1-3} \cmidrule(lr){5-8}
        & \multicolumn{2}{c}{\textbf{Interval error frequency}} & & & \multicolumn{3}{c}{\textbf{Interval error frequency}} \\
        \cmidrule(lr){2-3} \cmidrule(lr){6-8}
        \textbf{Dataset} & \textbf{$\pmb{l_{1}}$} & \textbf{CQR} & & \textbf{Dataset} & \textbf{0-1} & \textbf{LAC} & \textbf{APS} \\
        \midrule
        Abalone & 0.0000 & 0.0000 & & APSFailure (2) & 0.0000 & 0.4420 & 0.0040 \\
        AirFoil & 0.0000 & 0.0000 & & Adult (2) & 0.0000 & 0.0000 & 0.0000 \\
        AirQuality & 0.0020 & 1.0000 & & Avila (12) & 0.0000 & 0.0000 & 0.0080 \\
        BlogFeedback & 0.0000 & 1.0000 & & BankMarketing (2) & 0.0370 & 0.0000 & 0.0020 \\
        CTSlices & 0.0050 & 0.8820 & & CardDefault (2) & 0.0000 & 0.0000 & 0.0000 \\
        FacebookComments & 0.0000 & 1.0000 & & Landsat (6) & 0.0000 & 0.0000 & 0.0020 \\
        OnlineNews & 0.0000 & 0.0000 & & LetterRecognition (26) & 0.0000 & 0.0000 & 0.0000 \\
        PowerPlant & 0.0000 & 0.0000 & & MagicGamma (2) & 0.0000 & 0.0000 & 0.0010 \\
        Superconductivity & 0.0000 & 0.0330 & & SensorLessDrive (11) & 0.0000 & 0.0000 & 0.0000 \\
        WhiteWineQuality & 0.0000 & 0.8920 & & Shuttle (7) & 0.0000 & 1.0000 & 0.0050 \\
        \bottomrule
    \end{tabular}
    \end{scriptsize}
    \end{center}
    \vspace{-10pt}
\end{table}

\subsection{High-Dimensional Regression}
\label{appendix:experiments-uci-high-dimensional-regression}

\begin{table}[t]
    \caption{
        \textbf{Marginal expected prediction set sizes (high-dimensional regression).}
        We illustrate the marginal expected prediction set sizes.
        The estimates are obtained via Monte Carlo averaging, our point estimates, and our interval estimates (lower-upper bounds with $\gamma = 0.1$).
        We also compute the absolute errors between our individual point estimates and the mean Monte Carlo average, and the frequencies of error of our individual interval estimates bounding the mean Monte Carlo average.
        We report the means and standard deviations.
    }
    \label{table:uci-high-dimensional-regression-size}
    \vspace{-10pt}
    \begin{center}
    \begin{scriptsize}
    \begin{tabular}{llC{1.8cm}C{1.8cm}C{1.8cm}C{1.8cm}C{1.8cm}C{1.9cm}}
        \toprule
        & & \multicolumn{4}{c}{\textbf{Marginal expected prediction set size}} & \\
        \cmidrule(lr){3-6}
        & \textbf{Dataset} & \textbf{Our interval lower bound} & \textbf{Monte Carlo average} & \textbf{Our point estimate} & \textbf{Our interval upper bound} & \textbf{Absolute error} & \textbf{Interval error frequency} \\
        \midrule
        $\pmb{l_{1}}$ & \multirow{2}{*}{Parkinson} & 1.33\textsubscript{0.20} & 1.85\textsubscript{0.29} & 1.85\textsubscript{0.28} & 2.71\textsubscript{0.43} & 0.24\textsubscript{0.14} & 0.0010 \\
        $\pmb{l_{2}}$ & & 1.07\textsubscript{0.16} & 1.48\textsubscript{0.23} & 1.48\textsubscript{0.22} & 2.16\textsubscript{0.34} & 0.19\textsubscript{0.11} & 0.0000 \\
        \bottomrule
    \end{tabular}
    \end{scriptsize}
    \end{center}
    \vspace{-10pt}
\end{table}

Here we consider the Parkinson dataset from the UCI database~\citep{kelly2023uci}, a 2-dimensional regression dataset with 5875 data points and 19-dimensional features.
We use the $l_{1}$ and the $l_{2}$ loss non-conformity functions with multiplicative factors $\mf = 4 r$ and $\mf = \pi$ respectively (cf.~\cref{appendix:multiplicative-factor-lp-regression}).
As the multiplicative factors are known, we compute our empirical estimates from~\cref{subsection:size-estimation-known-multiplicative-factor}.
We use the same setup as in~\cref{section:experiments-setup}.

\cref{table:uci-high-dimensional-regression-size} illustrates the Monte Carlo average and our estimates for the marginal expected prediction set size on the Parkinson dataset.
We observe trends similar to those in~\cref{table:uci-size}; the means of our point estimates are close to that of the Monte Carlo average, with the standard deviations being comparable despite using $3 \times$ fewer data points.
This is also reflected in the low absolute error between our individual point estimates and the mean Monte Carlo average.
Additionally, our interval estimates provide lower-upper bounds on the expected set size.
Similar to~\cref{table:uci-interval-error}, the error frequencies of our individual estimated intervals bounding the mean Monte Carlo average are below $\gamma = 0.1$.
These results corroborate the efficacy of our estimates on high-dimensional regression problems.

\subsection{Insights Ablation}
\label{appendix:experiments-uci-insights}

\begin{table}[t]
    \caption{
        \textbf{Marginal expected prediction set sizes (insights ablation).}
        We illustrate changes in the marginal expected prediction set sizes (the Monte Carlo averages) using different non-conformity functions and UCI datasets.
        The first column corresponds to no change from the setup in~\cref{section:experiments-setup}.
        The second corresponds to an increase in the amount of training data.
        The third corresponds to a decrease in the significance level.
        The fourth corresponds to an increase in the amount of calibration data.
        We report the means and standard deviations.
    }
    \label{table:uci-insights-size}
    \vspace{-10pt}
    \begin{center}
    \begin{scriptsize}
    \begin{tabular}{lllC{2.4cm}C{2.4cm}C{2.4cm}C{2.4cm}}
        \toprule
        & & & \multicolumn{4}{c}{\textbf{Marginal expected prediction set size (Monte Carlo average)}} \\
        \cmidrule(lr){4-7}
        & & \textbf{Dataset} & \textbf{No change} & \textbf{Increase in training data} & \textbf{Decrease in significance level} & \textbf{Increase in calibration data} \\
        \midrule
        \multirow{10}{*}{\rotatebox[origin=c]{90}{\textbf{Regression ($\pmb{\mathcal{Y} = \mathds{R}}$)}}} & \multirow{5}{*}{\rotatebox[origin=c]{90}{$\pmb{l_{1}}$}} & Abalone & 2.19\textsubscript{0.09} & 2.14\textsubscript{0.09} & 4.86\textsubscript{0.36} & 2.18\textsubscript{0.06} \\
        & & AirFoil & 1.39\textsubscript{0.10} & 1.04\textsubscript{0.08} & 2.80\textsubscript{0.39} & 1.38\textsubscript{0.08} \\
        & & AirQuality & 0.02\textsubscript{0.00} & 0.01\textsubscript{0.00} & 0.07\textsubscript{0.02} & 0.02\textsubscript{0.00} \\
        & & PowerPlant & 0.70\textsubscript{0.02} & 0.64\textsubscript{0.01} & 1.32\textsubscript{0.06} & 0.70\textsubscript{0.01} \\
        & & WhiteWineQuality & 2.58\textsubscript{0.08} & 2.43\textsubscript{0.08} & 4.81\textsubscript{0.27} & 2.57\textsubscript{0.05} \\
        \cmidrule(lr){2-7}
        & \multirow{5}{*}{\rotatebox[origin=c]{90}{\textbf{CQR}}} & Abalone & 2.17\textsubscript{0.98} & 1.96\textsubscript{0.97} & 4.09\textsubscript{1.15} & 2.19\textsubscript{0.97} \\
        & & AirFoil & 1.58\textsubscript{0.64} & 1.22\textsubscript{0.56} & 3.01\textsubscript{0.81} & 1.57\textsubscript{0.63} \\
        & & AirQuality & 0.02\textsubscript{0.11} & 0.01\textsubscript{0.06} & 0.05\textsubscript{0.19} & 0.02\textsubscript{0.11} \\
        & & PowerPlant & 0.73\textsubscript{0.26} & 0.68\textsubscript{0.25} & 1.27\textsubscript{0.32} & 0.73\textsubscript{0.26} \\
        & & WhiteWineQuality & 2.24\textsubscript{0.89} & 2.11\textsubscript{0.93} & 5.08\textsubscript{1.18} & 2.24\textsubscript{0.89} \\
        \midrule
        \multirow{15}{*}{\rotatebox[origin=c]{90}{\textbf{Classification (discrete $\pmb{\mathcal{Y}}$)}}} & \multirow{5}{*}{\rotatebox[origin=c]{90}{\textbf{0-1}}} & Avila (12) & 1.00\textsubscript{0.00} & 1.00\textsubscript{0.00} & 12.00\textsubscript{0.00} & 1.00\textsubscript{0.00} \\
        & & CardDefault (2) & 2.00\textsubscript{0.00} & 2.00\textsubscript{0.00} & 2.00\textsubscript{0.00} & 2.00\textsubscript{0.00} \\
        & & Landsat (6) & 4.79\textsubscript{2.14} & 1.35\textsubscript{1.28} & 6.00\textsubscript{0.00} & 5.25\textsubscript{1.78} \\
        & & LetterRecognition (26) & 1.00\textsubscript{0.00} & 1.00\textsubscript{0.00} & 26.00\textsubscript{0.00} & 1.00\textsubscript{0.00} \\
        & & MagicGamma (2) & 2.00\textsubscript{0.00} & 2.00\textsubscript{0.00} & 2.00\textsubscript{0.00} & 2.00\textsubscript{0.00} \\
        \cmidrule(lr){2-7}
        & \multirow{5}{*}{\rotatebox[origin=c]{90}{\textbf{LAC}}} & Avila (12) & 0.93\textsubscript{0.26} & 0.90\textsubscript{0.29} & 1.26\textsubscript{0.48} & 0.93\textsubscript{0.26} \\
        & & CardDefault (2) & 1.25\textsubscript{0.44} & 1.25\textsubscript{0.43} & 1.87\textsubscript{0.34} & 1.25\textsubscript{0.44} \\
        & & Landsat (6) & 1.02\textsubscript{0.25} & 0.99\textsubscript{0.22} & 1.69\textsubscript{0.93} & 1.02\textsubscript{0.25} \\
        & & LetterRecognition (26) & 0.97\textsubscript{0.32} & 0.93\textsubscript{0.29} & 2.29\textsubscript{1.65} & 0.97\textsubscript{0.32} \\
        & & MagicGamma (2) & 1.07\textsubscript{0.26} & 1.06\textsubscript{0.24} & 1.63\textsubscript{0.48} & 1.07\textsubscript{0.26} \\
        \cmidrule(lr){2-7}
        & \multirow{5}{*}{\rotatebox[origin=c]{90}{\textbf{APS}}} & Avila (12) & 1.22\textsubscript{0.69} & 1.09\textsubscript{0.61} & 2.33\textsubscript{1.37} & 1.22\textsubscript{0.69} \\
        & & CardDefault (2) & 1.36\textsubscript{0.50} & 1.35\textsubscript{0.50} & 1.91\textsubscript{0.29} & 1.36\textsubscript{0.50} \\
        & & Landsat (6) & 1.32\textsubscript{0.78} & 1.27\textsubscript{0.74} & 2.27\textsubscript{1.39} & 1.31\textsubscript{0.78} \\
        & & LetterRecognition (26) & 2.49\textsubscript{2.63} & 2.20\textsubscript{2.42} & 6.80\textsubscript{6.11} & 2.49\textsubscript{2.63} \\
        & & MagicGamma (2) & 1.21\textsubscript{0.49} & 1.19\textsubscript{0.49} & 1.74\textsubscript{0.44} & 1.21\textsubscript{0.49} \\
        \bottomrule
    \end{tabular}
    \end{scriptsize}
    \end{center}
    \vspace{-10pt}
\end{table}

We analyzed the dependence of the expected size of prediction sets on various user-specified parameters in~\cref{subsection:size-quantification-insights}.
Here we empirically validate our analysis by providing experimental results on such parameter changes.

We use the same setup as the one highlighted in~\cref{section:experiments-setup}, and further add 3 settings, changing a user-specified parameter one at a time.
These settings are:~(i) an increase in the amount of training data from 25\% to 50\% of the dataset, hence learning a machine learning model that generalizes better (which is further used to implement the non-conformity function), (ii) a decrease in the significance level $\alpha$ from 0.1 to 0.01, allowing for fewer errors in the conformal system, and (iii) an increase in the amount of calibration data $n$ from 25\% to 50\% of the dataset.

\cref{table:uci-insights-size} illustrates the change in the marginal expected prediction set size (the Monte Carlo average) under varying user-specified parameters.
We observe that:~(i) an increase in the amount of training data decreases the expected prediction set size, (ii) a decrease in the significance level increases the expected prediction set size, and (iii) an increase in the calibration data does not affect the expected prediction set size by much; the only exception is when using the 0-1 loss non-conformity function on Landsat, but the standard deviations of the estimates are high under this setting.
These experimental results empirically validate our analysis in~\cref{subsection:size-quantification-insights}.

\subsection{Conditional Expected Prediction Set Size}
\label{appendix:experiments-uci-conditional}

\begin{figure}[t]
    \begin{center}
    \begin{subfigure}[b]{\textwidth}
        \includegraphics[width=\textwidth]{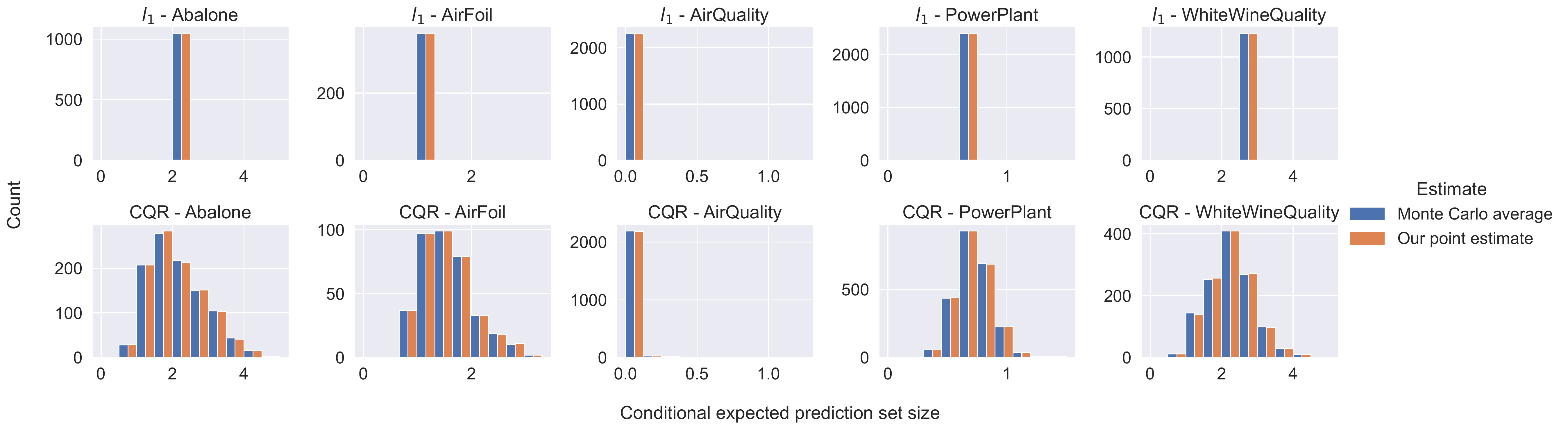}
    \end{subfigure}
    \begin{subfigure}[b]{\textwidth}
        \includegraphics[width=\textwidth]{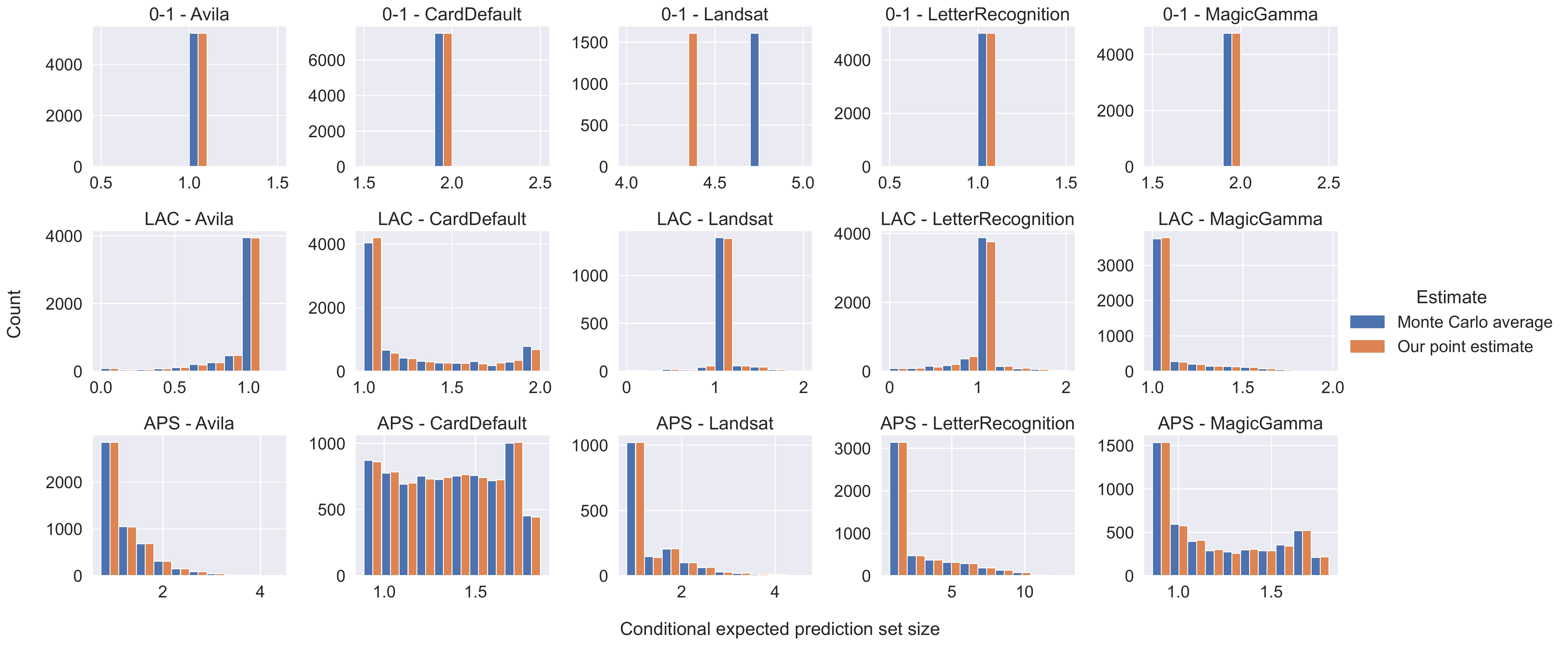}
    \end{subfigure}
    \end{center}
    \vspace{-10pt}
    \caption{
        \textbf{Expected prediction set sizes conditioned on the test datum feature.}
        We illustrate the expected sizes of split conformal prediction sets conditioned on varying test datum features using different non-conformity scores (rows) and UCI datasets (columns).
        The estimates are obtained via Monte Carlo averaging and our point estimates (refer to the legend for the color scheme); they are depicted as a histogram with side-by-side bars.
    }
    \label{figure:uci-test-conditional-size-all}
    \vspace{-10pt}
\end{figure}

Here we analyze the expected size of prediction sets conditioned on the test inputs.
This is an extension of our experimental results in~\cref{section:experiments-conditional}, where we considered the non-conformity functions CQR~\citep{romano2019conformalized} and APS~\citep{romano2020classification}; here we extend our analysis to other non-conformity functions as well.

\cref{figure:uci-test-conditional-size-all} depicts histograms of the Monte Carlo average and our point estimates for the expected set sizes conditioned on varying test inputs.
Similar to our observations from~\cref{figure:uci-test-conditional-size}, the histograms in~\cref{figure:uci-test-conditional-size-all} look identical for the two estimates, despite our point estimate not having seen the calibration data.
The only exception is when using the 0-1 loss non-conformity function on Landsat.
These results further corroborate the efficacy of our estimates in approximating the expected size of prediction sets conditioned on the test datum feature.

\subsection{Other Estimates}
\label{appendix:experiments-uci-other-estimates}

\begin{table}[!t]
    \caption{
        \textbf{Marginal expected prediction set sizes (point estimates).}
        We compare different point estimates for the marginal expected prediction set size with respect to regular Monte Carlo averaging.
        They are obtained via our point estimates and the same-data Monte Carlo average.
        We also compute the absolute errors between the individual estimates and the mean Monte Carlo average.
        We report the means and standard deviations.
    }
    \label{table:uci-other-point-estimates}
    \vspace{-10pt}
    \begin{center}
    \begin{scriptsize}
    \begin{tabular}{lllC{1.8cm}C{1.8cm}C{1.8cm}C{1.8cm}C{1.8cm}}
        \toprule
        & & & \multicolumn{3}{c}{\textbf{Marginal expected prediction set size}} & \multicolumn{2}{c}{\textbf{Absolute error}} \\
        \cmidrule(lr){4-6} \cmidrule(lr){7-8}
        & & \textbf{Dataset} & \textbf{Monte Carlo average} & \textbf{Our point estimate} & \textbf{Same-data Monte Carlo average} & \textbf{Our point estimate} & \textbf{Same-data Monte Carlo average} \\
        \midrule
        \multirow{10}{*}{\rotatebox[origin=c]{90}{\textbf{Regression ($\pmb{\mathcal{Y} = \mathds{R}}$)}}} & \multirow{5}{*}{\rotatebox[origin=c]{90}{$\pmb{l_{1}}$}} & Abalone & 2.19\textsubscript{0.09} & 2.19\textsubscript{0.09} & 2.20\textsubscript{0.13} & 0.07\textsubscript{0.05} & 0.11\textsubscript{0.08} \\
        & & AirFoil & 1.39\textsubscript{0.10} & 1.39\textsubscript{0.09} & 1.41\textsubscript{0.13} & 0.08\textsubscript{0.05} & 0.10\textsubscript{0.08} \\
        & & AirQuality & 0.02\textsubscript{0.00} & 0.02\textsubscript{0.00} &  0.02\textsubscript{0.00} & 0.00\textsubscript{0.00} & 0.00\textsubscript{0.00} \\
        & & PowerPlant & 0.70\textsubscript{0.02} & 0.70\textsubscript{0.02} & 0.70\textsubscript{0.02} & 0.01\textsubscript{0.01} & 0.02\textsubscript{0.01} \\
        & & WhiteWineQuality & 2.58\textsubscript{0.08} & 2.58\textsubscript{0.07} & 2.58\textsubscript{0.11} & 0.06\textsubscript{0.05} & 0.09\textsubscript{0.07} \\
        \cmidrule(lr){2-8}
        & \multirow{5}{*}{\rotatebox[origin=c]{90}{\textbf{CQR}}} & Abalone & 2.17\textsubscript{0.98} & 2.16\textsubscript{0.17} & 2.18\textsubscript{0.28} & 0.15\textsubscript{0.09} & 0.27\textsubscript{0.09} \\
        & & AirFoil & 1.58\textsubscript{0.64} & 1.58\textsubscript{0.07} & 1.60\textsubscript{0.11} & 0.05\textsubscript{0.04} & 0.09\textsubscript{0.07} \\
        & & AirQuality & 0.02\textsubscript{0.11} & 0.02\textsubscript{0.00} & 0.02\textsubscript{0.00} & 0.00\textsubscript{0.00} & 0.00\textsubscript{0.00} \\
        & & PowerPlant & 0.73\textsubscript{0.26} & 0.73\textsubscript{0.01} & 0.74\textsubscript{0.02} & 0.01\textsubscript{0.01} & 0.01\textsubscript{0.01} \\
        & & WhiteWineQuality & 2.24\textsubscript{0.89} & 2.24\textsubscript{0.10} & 2.24\textsubscript{0.10} & 0.07\textsubscript{0.06} & 0.08\textsubscript{0.06} \\
        \midrule
        \multirow{15}{*}{\rotatebox[origin=c]{90}{\textbf{Classification (discrete $\pmb{\mathcal{Y}}$)}}} & \multirow{5}{*}{\rotatebox[origin=c]{90}{\textbf{0-1}}} & Avila (12) & 1.00\textsubscript{0.00} & 1.00\textsubscript{0.00} & 1.00\textsubscript{0.00} & 0.00\textsubscript{0.00} & 0.00\textsubscript{0.00} \\
        & & CardDefault (2) & 2.00\textsubscript{0.00} & 2.00\textsubscript{0.00} & 2.00\textsubscript{0.00} & 0.00\textsubscript{0.00} & 0.00\textsubscript{0.00} \\
        & & Landsat (6) & 4.79\textsubscript{2.14} & 4.47\textsubscript{1.31} & 4.61\textsubscript{2.24} & 1.07\textsubscript{0.82} & 1.93\textsubscript{1.16} \\
        & & LetterRecognition (26) & 1.00\textsubscript{0.00} & 1.00\textsubscript{0.01} & 1.00\textsubscript{0.00} & 0.00\textsubscript{0.01} & 0.00\textsubscript{0.00} \\
        & & MagicGamma (2) & 2.00\textsubscript{0.00} & 2.00\textsubscript{0.00} & 2.00\textsubscript{0.00} & 0.00\textsubscript{0.00} & 0.00\textsubscript{0.00} \\
        \cmidrule(lr){2-8}
        & \multirow{5}{*}{\rotatebox[origin=c]{90}{\textbf{LAC}}} & Avila (12) & 0.93\textsubscript{0.26} & 0.93\textsubscript{0.01} & 0.93\textsubscript{0.01} & 0.00\textsubscript{0.00} & 0.01\textsubscript{0.01} \\
        & & CardDefault (2) & 1.25\textsubscript{0.44} & 1.25\textsubscript{0.01} & 1.25\textsubscript{0.02} & 0.01\textsubscript{0.01} & 0.02\textsubscript{0.01} \\
        & & Landsat (6) & 1.02\textsubscript{0.25} & 1.02\textsubscript{0.02} & 1.02\textsubscript{0.03} & 0.01\textsubscript{0.01} & 0.02\textsubscript{0.02} \\
        & & LetterRecognition (26) & 0.97\textsubscript{0.32} & 0.97\textsubscript{0.01} & 0.97\textsubscript{0.02} & 0.01\textsubscript{0.00} & 0.01\textsubscript{0.01} \\
        & & MagicGamma (2) & 1.07\textsubscript{0.26} & 1.07\textsubscript{0.01} & 1.07\textsubscript{0.02} & 0.01\textsubscript{0.01} & 0.01\textsubscript{0.01} \\
        \cmidrule(lr){2-8}
        & \multirow{5}{*}{\rotatebox[origin=c]{90}{\textbf{APS}}} & Avila (12) & 1.22\textsubscript{0.69} & 1.22\textsubscript{0.02} & 1.22\textsubscript{0.03} & 0.02\textsubscript{0.01} & 0.03\textsubscript{0.02} \\
        & & CardDefault (2) & 1.36\textsubscript{0.50} & 1.36\textsubscript{0.01} & 1.36\textsubscript{0.02} & 0.01\textsubscript{0.01} & 0.02\textsubscript{0.01} \\
        & & Landsat (6) & 1.32\textsubscript{0.78} & 1.32\textsubscript{0.03} & 1.31\textsubscript{0.05} & 0.03\textsubscript{0.02} & 0.04\textsubscript{0.03} \\
        & & LetterRecognition (26) & 2.49\textsubscript{2.63} & 2.49\textsubscript{0.07} & 2.50\textsubscript{0.10} & 0.05\textsubscript{0.04} & 0.08\textsubscript{0.06} \\
        & & MagicGamma (2) & 1.21\textsubscript{0.49} & 1.21\textsubscript{0.01} & 1.21\textsubscript{0.02} & 0.01\textsubscript{0.01} & 0.02\textsubscript{0.01} \\
        \bottomrule
    \end{tabular}
    \end{scriptsize}
    \end{center}
    
    \caption{
        \textbf{Marginal expected prediction set sizes (interval estimates).}
        We compare interval estimates for the marginal expected prediction set size (with $\gamma = 0.1$).
        They are obtained via our interval estimates, the central limit theorem (CLT), Hoeffding's inequality (HI), and Bernstein's inequality (BI).
        We compute their sizes and error frequencies in bounding the mean Monte Carlo average.
        We report the means and standard deviations.
    }
    \label{table:uci-other-interval-estimates}
    \vspace{-10pt}
    \begin{center}
    \begin{scriptsize}
    \begin{tabular}{lllC{1.5cm}C{1cm}C{1cm}C{1cm}C{1cm}C{1cm}C{1cm}C{1cm}}
        \toprule
        & & & \multicolumn{4}{c}{\textbf{Interval size}} & \multicolumn{4}{c}{\textbf{Interval error frequency}} \\
        \cmidrule(lr){4-7} \cmidrule(lr){8-11}
        & & \textbf{Dataset} & \textbf{Ours} & \textbf{CLT} & \textbf{HI} & \textbf{BI} & \textbf{Ours} & \textbf{CLT} & \textbf{HI} & \textbf{BI} \\
        \midrule
        \multirow{10}{*}{\rotatebox[origin=c]{90}{\textbf{Regression ($\pmb{\mathcal{Y} = \mathds{R}}$)}}} & \multirow{5}{*}{\rotatebox[origin=c]{90}{$\pmb{l_{1}}$}} & Abalone & 0.84\textsubscript{0.10} & 0.00\textsubscript{0.00} & & & 0.00 & 1.00 & & \\
        & & AirFoil & 0.91\textsubscript{0.11} & 0.00\textsubscript{0.00} & & & 0.00 & 1.00 & & \\
        & & AirQuality & 0.01\textsubscript{0.00} & 0.00\textsubscript{0.00} & & & 0.00 & 1.00 & & \\
        & & PowerPlant & 0.13\textsubscript{0.01} & 0.00\textsubscript{0.00} & & & 0.00 & 1.00 & & \\
        & & WhiteWineQuality & 0.70\textsubscript{0.07} & 0.00\textsubscript{0.00} & & & 0.00 & 1.00 & & \\
        \cmidrule(lr){2-11}
        & \multirow{5}{*}{\rotatebox[origin=c]{90}{\textbf{CQR}}} & Abalone & 0.64\textsubscript{0.03} & 0.13\textsubscript{0.01} & & & 0.00 & 1.00 & & \\
        & & AirFoil & 0.73\textsubscript{0.12} & 0.15\textsubscript{0.01} & & & 0.00 & 0.48 & & \\
        & & AirQuality & 0.00\textsubscript{0.00} & 0.01\textsubscript{0.00} & & & 1.00 & 0.12 & & \\
        & & PowerPlant & 0.10\textsubscript{0.01} & 0.02\textsubscript{0.00} & & & 0.00 & 0.52 & & \\
        & & WhiteWineQuality & 0.03\textsubscript{0.08} & 0.12\textsubscript{0.01} & & & 0.89 & 0.52 & & \\
        \midrule
        \multirow{15}{*}{\rotatebox[origin=c]{90}{\textbf{Classification (discrete $\pmb{\mathcal{Y}}$)}}} & \multirow{5}{*}{\rotatebox[origin=c]{90}{\textbf{0-1}}} & Avila (12) & 0.00\textsubscript{0.00} & 0.00\textsubscript{0.00} & 0.58\textsubscript{0.00} & 0.02\textsubscript{0.00} & 0.00 & 0.00 & 0.00 & 0.00 \\
        & & CardDefault (2) & 0.00\textsubscript{0.00} & 0.00\textsubscript{0.00} & 0.08\textsubscript{0.00} & 0.00\textsubscript{0.00} & 0.00 & 0.00 & 0.00 & 0.00 \\
        & & Landsat (6) & 4.95\textsubscript{0.21} & 0.00\textsubscript{0.00} & 0.52\textsubscript{0.00} & 0.03\textsubscript{0.00} & 0.00 & 1.00 & 1.00 & 1.00 \\
        & & LetterRecognition (26) & 5.50\textsubscript{5.85} & 0.00\textsubscript{0.00} & 1.27\textsubscript{0.00} & 0.04\textsubscript{0.00} & 0.00 & 0.00 & 0.00 & 0.00 \\
        & & MagicGamma (2) & 0.03\textsubscript{0.06} & 0.00\textsubscript{0.00} & 0.10\textsubscript{0.00} & 0.00\textsubscript{0.00} & 0.00 & 0.00 & 0.00 & 0.00 \\
        \cmidrule(lr){2-11}
        & \multirow{5}{*}{\rotatebox[origin=c]{90}{\textbf{LAC}}} & Avila (12) & 0.05\textsubscript{0.00} & 0.02\textsubscript{0.00} & 0.58\textsubscript{0.00} & 0.04\textsubscript{0.00} & 0.00 & 0.44 & 0.00 & 0.10 \\
        & & CardDefault (2) & 0.12\textsubscript{0.01} & 0.02\textsubscript{0.00} & 0.08\textsubscript{0.00} & 0.04\textsubscript{0.00} & 0.00 & 0.62 & 0.07 & 0.42 \\
        & & Landsat (6) & 0.15\textsubscript{0.01} & 0.03\textsubscript{0.00} & 0.52\textsubscript{0.00} & 0.06\textsubscript{0.00} & 0.00 & 0.58 & 0.00 & 0.26 \\
        & & LetterRecognition (26) & 0.08\textsubscript{0.01} & 0.02\textsubscript{0.00} & 1.27\textsubscript{0.00} & 0.06\textsubscript{0.00} & 0.00 & 0.50 & 0.00 & 0.07 \\
        & & MagicGamma (2) & 0.09\textsubscript{0.01} & 0.02\textsubscript{0.00} & 0.10\textsubscript{0.00} & 0.03\textsubscript{0.00} & 0.00 & 0.56 & 0.00 & 0.39 \\
        \cmidrule(lr){2-11}
        & \multirow{5}{*}{\rotatebox[origin=c]{90}{\textbf{APS}}} & Avila (12) & 0.14\textsubscript{0.01} & 0.05\textsubscript{0.00} & 0.58\textsubscript{0.00} & 0.08\textsubscript{0.00} & 0.01 & 0.48 & 0.00 & 0.22 \\
        & & CardDefault (2) & 0.12\textsubscript{0.01} & 0.03\textsubscript{0.00} & 0.08\textsubscript{0.00} & 0.04\textsubscript{0.00} & 0.00 & 0.54 & 0.08 & 0.36 \\
        & & Landsat (6) & 0.24\textsubscript{0.03} & 0.09\textsubscript{0.00} & 0.52\textsubscript{0.00} & 0.15\textsubscript{0.01} & 0.00 & 0.36 & 0.00 & 0.11 \\
        & & LetterRecognition (26) & 0.51\textsubscript{0.04} & 0.17\textsubscript{0.01} & 1.27\textsubscript{0.00} & 0.28\textsubscript{0.01} & 0.00 & 0.40 & 0.00 & 0.18 \\
        & & MagicGamma (2) & 0.11\textsubscript{0.01} & 0.03\textsubscript{0.00} & 0.10\textsubscript{0.00} & 0.05\textsubscript{0.00} & 0.00 & 0.41 & 0.01 & 0.20 \\
        \bottomrule
    \end{tabular}
    \end{scriptsize}
    \end{center}
    \vspace{-10pt}
\end{table}

The Monte Carlo average is the commonly used empirical estimate for the expected prediction set size.
In~\cref{section:experiments-marginal}, we compared our estimates from~\cref{section:size-estimation} with the Monte Carlo average, where the former used $3 \times$ fewer data points in its approximation.
Here we compare the two when the same data points are used for both.

We first describe how the Monte Carlo average is obtained.
This equates to sampling a (pseudo) calibration data and obtaining conformal prediction sets on multiple (pseudo) test data; the average size of the obtained prediction sets is the Monte Carlo average.
As we did before, we assume access to $k$ data points $Z^{\prime}_{1} = ( X^{\prime}_{1} , Y^{\prime}_{1} ), \ldots, Z^{\prime}_{k} = ( X^{\prime}_{k} , Y^{\prime}_{k} )$ that are available for the purpose of deriving estimates.
When $k > n$, we can sample $n$ data points to be the (pseudo) calibration data and the remaining $k - n$ data points to be the (pseudo) test data; this matches the number of calibration data used in the estimation and the one we want to estimate for.
However, when $k \leq n$ (which is often the case), we cannot avoid the mismatch in the calibration data used in the estimation and the one we want to estimate for.
Instead, we split the data into $k / 2$ each for both calibration and test.
As a result, the sizes of the obtained prediction sets are i.i.d.~(which will be useful later) and we call their average the same-data Monte Carlo average (since we will use the same $k$ accessible data points as the ones used for our estimates).

We follow the same setup as in~\cref{section:experiments-setup} to compare our point estimate and the same-data Monte Carlo average with respect to the regular Monte Carlo average;~\cref{table:uci-other-point-estimates} compares these estimates.
We observe that the means of both our point estimates and the same-data Monte Carlo averages are close to that of the Monte Carlo average, but our point estimates have comparable or smaller standard deviations.
When comparing the absolute errors between the individual estimates and the mean Monte Carlo average, the errors of our point estimates never exceed those of the same-data Monte Carlo averages.
This corroborates the practical use of our point estimates.

Furthermore, since the prediction set sizes are i.i.d.,~we can obtain confidence intervals for the expected prediction set size using concentration inequalities; we make use the following ones:~(i) the central limit theorem (CLT), (ii) Hoeffding's inequality (HI), and (iii) Bernstein's inequality (BI).
CLT is valid only asymptotically.
HI is finite-sample valid for random variables with known bounds, so is useful only for classification problems (where the set size is bound by $[ 0 , \lvert \mathcal{Y} \rvert ]$).
BI is finite-sample valid for random variables with known bounds and variances (useful only for classification problems); however, since the variance is unknown and needs to be approximated, the intervals may not be valid confidence intervals.
Alternatively, our interval estimates combine the expected set size in~\cref{theorem:size-quantification} and the Dvoretzky–Kiefer–Wolfowitz inequality~\citep{dvoretzky1956asymptotic,massart1990tight}.

We again follow the same setup as in~\cref{section:experiments-setup} to compare our interval estimates and the ones obtained using CLT, HI, and BI;~\cref{table:uci-other-interval-estimates} compares these estimates (with $\gamma$ set to 0.1).
We compute the interval sizes and the frequencies of error in bounding the mean Monte Carlo average.
We observe that the CLT and the BI intervals are small, but their error frequencies can be high.
On the other hand, our intervals and the HI intervals consistently achieve errors below the requirement of $\gamma = 0.1$.
Further, the HI intervals cannot be applied to regression problems, and they do not change with the non-conformity function as the intervals are determined by the classification problem (the number of labels/classes), the number of data points used in the approximation, and $\gamma$.
On the other hand, our intervals can be applied to regression problems, and they adapt with the non-conformity function being used, often resulting in smaller interval sizes.
Hence our estimated intervals provide practical use.

\section{EXPERIMENTS ON SYNTHETIC EXAMPLE}
\label{appendix:experiments-synthetic}

\begin{figure}[t]
    \begin{center}
        \includegraphics[width=\textwidth]{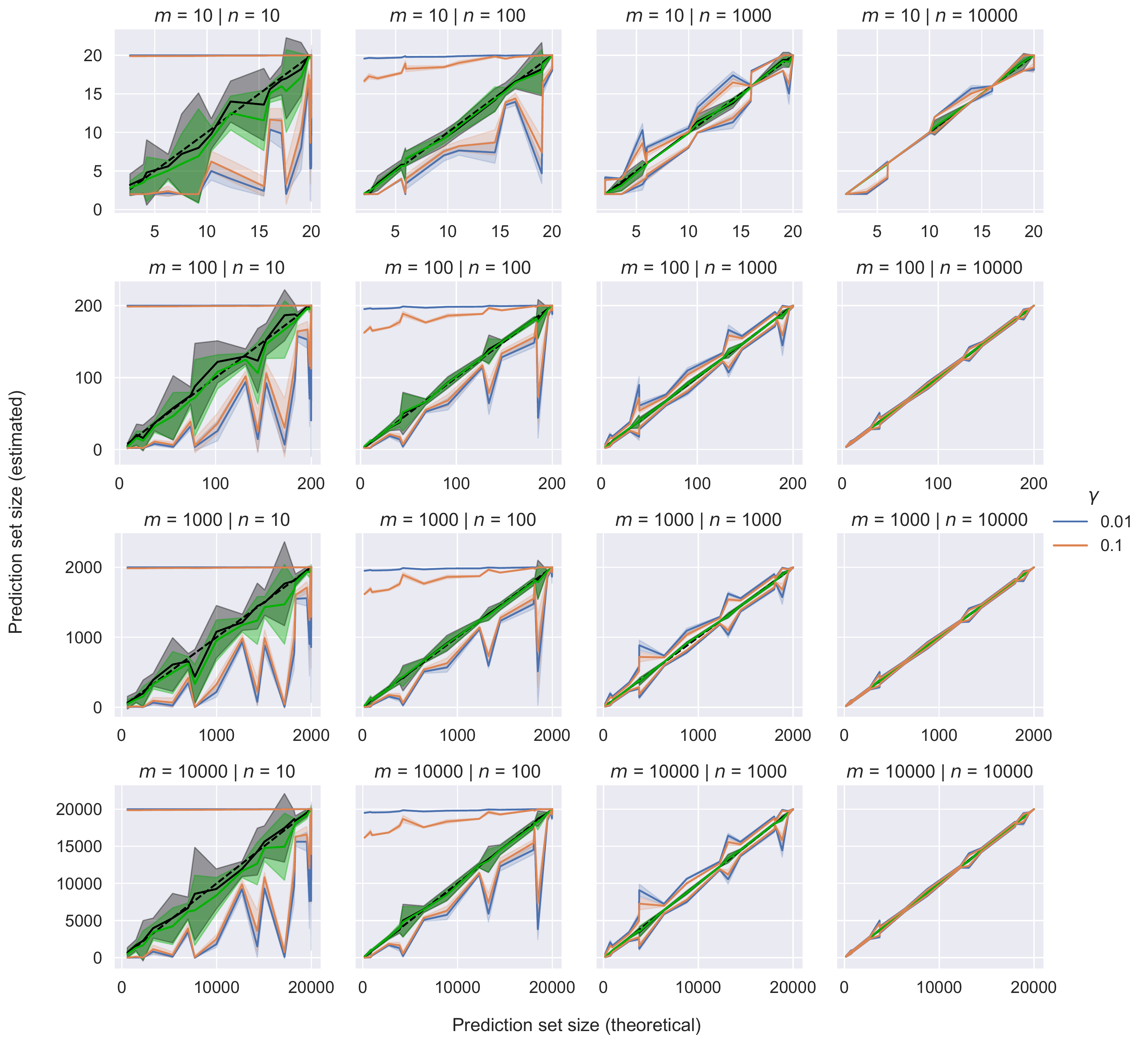}
    \end{center}
    \vspace{-10pt}
    \caption{
        \textbf{Marginal expected prediction set sizes (synthetic example).}
        We plot the theoretically expected prediction set sizes (cf.~\cref{equation:size-quantification-iid}) on the x-axis vs. its empirical estimates on the y-axis.
        These estimates include:~(i) the Monte Carlo average (solid black line), (ii) our point estimate from~\cref{subsection:size-estimation-known-multiplicative-factor} (green line), and (iii) our upper-lower confidence bounds from~\cref{corollary:size-confidence-interval} (orange/blue lines, changing with $\gamma$ as per the legend).
        $\alpha$ is set to 0.1 and the results are averaged to a line plot over different $a$ and $b$ values (bands denote the standard deviations).
        Additionally, the dashed black line is the identity line.
        The number of calibration data points $n$ increases from left to right.
        The size of the space of non-conformity scores $m$ increases from top to bottom.
    }
    \label{figure:experiments-synthetic}
    \vspace{-10pt}
\end{figure}

We experimentally validate our theoretical results for the marginal expected prediction set size in~\cref{theorem:size-quantification,corollary:size-confidence-interval} through a synthetic example.
We design a setup where the distribution of the calibration non-conformity scores is known.
We set the space of non-conformity scores to be a discrete set of $m$ values $\mathcal{R} = \{ r_{1} , \ldots , r_{m} \}$, where $r_{1} < \ldots < r_{m}$.
We set the distribution of the calibration non-conformity scores to be a beta-binomial distribution $\text{BetaBin} ( m - 1 , a , b )$ over the indices, with parameters $a , b > 0$.
Then, for a non-conformity score $r_{i} \in \mathcal{R}$, $\tilde{P}_{R} ( r_{i} ) = \mathds{P} \{ \text{BetaBin}(m - 1, a, b) \leq i - 2 \}$.
Additionally, we set $\mf = 2$.

For a fixed set of values of the parameters $a, b, m, n$, and $\alpha = 0.1$, we theoretically compute the expected prediction set size using~\cref{equation:size-quantification-iid}.
We compare this against:~(i) the average size of prediction sets constructed by running the conformal prediction algorithm (the Monte Carlo average), and (ii) our point estimates and confidence intervals (for a given $\gamma$) of the expected prediction set size computed using~\cref{subsection:size-estimation-known-multiplicative-factor}.
We randomly sample $n$ i.i.d.~non-conformity scores; we use these as the calibration non-conformity scores for the Monte Carlo average, and use the same as accessible non-conformity scores for our estimates.
We repeat the process 10 times using different random seeds.
Additionally, we vary the parameter values in the following way:~$a, b \in \{ .0625, .25, 1, 4, 16 \}$, $m, n \in \{ 10, 100, 1000, 10000 \}$, and $\gamma \in \{ 0.1, 0.01 \}$.
This results in a total of 800 settings, repeated 10 times each.

\cref{figure:experiments-synthetic} plots the theoretically expected prediction set sizes and their empirical estimates across these different settings, which we average over different $a$ and $b$ to obtain line plots.
The identity line (dashed black line) is the theoretically expected prediction set size from~\cref{equation:size-quantification-iid}.
We make the following observations.
(i) The Monte Carlo average (solid black line) collapses to the identity line as $n$ increases from left to right; this validates our quantification of the expected set size in~\cref{theorem:size-quantification}.
(ii) The average of our point estimates (green line) also collapses to the identity line as $n$ increases.
(iii) Our confidence intervals contain the identity line with high probability; for $\gamma = 0.1$ (orange lines) and $\gamma = 0.01$ (blue lines), the confidence intervals contain the theoretically expected size values $99.9 \%$ and $100.0\%$ of the time respectively.
Additionally, as $n$ increases, these confidence intervals collapse to the identity line.
These validate our estimates in~\cref{subsection:size-estimation-known-multiplicative-factor} and our result in~\cref{corollary:size-confidence-interval}.

}

\end{document}